\newtheorem{theorem}{Theorem}
\newtheorem{lemma}[theorem]{Lemma}
\newtheorem{proposition}[theorem]{Proposition}
\newcommand{\LineComment}[1]{\STATE \(//\) \texttt{#1}}
\def\ie{i.e.}
\def\eg{e.g.}
\def\aka{aka.}
\def\etal{\emph{et al.}}
\DeclareMathOperator*{\argmin}{argmin}
\newcommand{\topelement}[1]{\scriptsize{#1}}
\newcommand{\defineq}{\triangleq}
\def\xneg{\mathop{\breve{\bm{x}}}}
\def\xpos{\mathop{\bm{a}}}
\def\xqtb{\mathop{\tilde{\bm{a}}}}
\def\insx{\bm{x}}
\def\vecw{\bm{w}}
\def\newy{\ddot{y}}
\def\comp{\mathcal{O}}
\def\newDist{\mathbf{D}}
\def\newdist{\mathbf{d}}
\def\newFist{\mathbf{D}_f}
\def\newfist{\mathbf{df}}
\newcommand{\extDist}[1]{\mathbf{D}_{\cdot,\bm{a}}^\text{#1}(S,a_i)}
\newcommand{\extEist}[1]{\mathbf{D}_{\cdot,\bm{a}}^\text{#1}(S)}
\newcommand{\extBist}{\mathbf{D}_\cdot(S_1,\bar{S}_1)}
\newcommand{\extprev}[1]{\mathbf{d}\big( (\breve{\bm{x}},{#1}),(\breve{\bm{x}}',{#1}') \big)}
\def\ppsdisfull{Harmonic Fairness measure via Manifolds}
\def\ppsdisabbr{\emph{HFM}}
\def\ppssubfull{Acceleration sub-procedure}%
\def\ppssubabbr{\emph{AcceleDist}}
\def\ppsalgfull{Approximation of distance between sets for one sensitive attribute with multiple values}
\def\ppsalgabbr{\emph{ApproxDist}}
\def\ppsuperfull{Approximation of extended distance between sets for several sensitive attributes with multiple values}
\def\ppsuperabbr{\emph{ExtendDist}}
\newcommand{\extDistAlt}[1]{\mathbf{D}_\mathbf{a}^\text{#1}(S,a_i)}
\newcommand{\extEistAlt}[1]{\mathbf{D}_\mathbf{a}^\text{#1}(S)}
\newcommand{\extBistAlt}[1]{\mathbf{D}_{#1}(S_1,\bar{S}_1)}
\newcommand{\topequation}{%
  \setlength\abovedisplayskip{2pt}%
  \setlength\belowdisplayskip{2pt}}
\begin{document}

\def\entitle{Approximating Discrimination Within Models When Faced With Several Non-Binary Sensitive Attributes}
\title{\entitle}

\author{Yijun Bian$^*$, Yujie Luo$^*$, and Ping Xu,~\IEEEmembership{Member,~IEEE}%
\thanks{These authors contributed equally, listed in alphabetical order.}%
\thanks{
Correspondence to Yijun Bian and Yujie Luo.}%
\thanks{Y. Bian is with the Department of Computer Science, University of Copenhagen, 2100 Copenhagen, Denmark (e-mail: yibi@di.ku.dk).}%
\thanks{Y. Luo is with the Department of Mathematics, National University of Singapore, Singapore 117543 (e-mail: lyj96@nus.edu.sg).}%
\thanks{P. Xu is with the Department of Electrical and Computer Engineering, The University of Texas Rio Grande Valley, TX 78539, United States (e-mail: ping.t.xu@utrgv.edu).}
}

\markboth{JOURNAL OF \LaTeX{} CLASS FILES,~Vol.~, No.~, August~2021}%
{Shell \MakeLowercase{\textit{et al.}}: A Sample Article Using IEEEtran.cls for IEEE Journals}

\hyphenation{pre-diction}

\maketitle

\begin{abstract} 
Discrimination mitigation within machine learning (ML) models could be complicated because multiple factors may be interwoven hierarchically and historically. 
Yet few existing fairness measures can capture the discrimination level within ML models in the face of multiple sensitive attributes (SAs). 
To bridge this gap, we propose a fairness measure based on distances between sets from a manifold perspective, named as `\emph{\ppsdisfull{}} (\ppsdisabbr{})' with two optional versions, which can deal with a fine-grained discrimination evaluation for several SAs of multiple values. 
Because directly computing \ppsdisabbr{} may be costly, to accelerate its subprocedure---the computation of distances of sets, 
we further propose two approximation algorithms named `\emph{\ppsalgfull{}} (\ppsalgabbr{})' and `\emph{\ppsuperfull{}} (\ppsuperabbr{})' to respectively resolve bias evaluation of one single SA with multiple values and that of several SAs with multiple values. 
Moreover, we provide an algorithmic effectiveness analysis for \ppsalgabbr{} under certain assumptions to explain how well it could work. 
The empirical results demonstrate that our proposed fairness measure \ppsdisabbr{} is valid and approximation algorithms (\ie{} \ppsalgabbr{} and \ppsuperabbr{}) are effective and efficient.\looseness=-1 
\end{abstract}

\begin{IEEEkeywords}
Fairness, machine learning, multi-attribute protection
\end{IEEEkeywords}

\section{Introduction}

\IEEEPARstart{A}{s} techniques of machine learning (ML) and deep learning (DL) are flourishingly developed and ML/DL systems are widely deployed in real life nowadays, 
concerns about the underlying discrimination hidden in these models has grown, particularly in high-stakes domains such as healthcare, recruitment, and jurisdiction \cite{pessach2022review}, where equity for all stakeholders is pivotal to prevent unjust outcomes, akin to a discriminatory Matthew Effect. 
It is important to prevent ML models from perpetuating or exacerbating inappropriate human prejudices not only for model performance but also for societal welfare. 
Effectively addressing and eliminating discrimination usually requires a comprehensive grasp of its occurrence, causes, and mechanisms. 
For instance, a case involving a person changing their gender for lower car insurance rates highlights the complexity of fairness in ML. 

Although the impressive practical advancements of ML and DL thrive on abundant data, their trustworthiness and equity heavily hinge on data quality. 
In fact, one of the primary sources of unfairness identified in the existing literature is biases from the data, possibly collected from various sources such as device measurements and historically biassed human decisions \cite{verma2018fairness}. 
Moreover, the challenge of data imbalance often looms in human-sensitive domains, amplifying concerns of discrimination and bias propagation in ML models. 
As a result, misformed model training would amplify imbalances and biases in data, with wide-reaching societal implications. 
For example, optimising aggregated prediction errors can advantage privileged groups over marginalised ones. 
In addition, missing data such as instances or values may introduce disparities between the dataset and the target population, leading to biassed results as well. 
Therefore, in order to ensure fairness and mitigate biases, it is crucial to properly address data imbalance and prevent ML models from perpetuating or even exacerbating inappropriate human prejudices.

To mitigate bias within ML models, the very first step is to promptly recognise its occurrence. 
However, promptly detecting discrimination fully, truly, and faithfully is not quite easy because of plenty of factors interwoven with each other. 
First, learning algorithms might yield unfair outcomes even with purely clean data due to proxy attributes for sensitive features or biassed algorithmic objectives. 
For instance, the educational background of one person might be a proxy attribute for those born in families with a preference for boys. 
Second, the existence of multiple sensitive attributes (SAs) and their interaction with each other highlights the complexity of bias tackling, like one member from a marginalised group could become one of the majority concerning another factor, or vice versa. 
Third, dynamic changes and historical factors may need to be taken into account, as bias hidden in data, data imbalance, and present decisions may interweave, causing interrelated impacts and vicious circles. 
Despite many fairness measures that have been proposed to facilitate bias mitigation, most of them mainly focus on one single SA or ones with binary values, and few could handle bias appropriately when facing multiple SAs with even multiple values. 
Therefore, it motivates us to investigate a proper tool to deal with bias in such aforementioned scenarios.

In this paper, we investigate the possibility of assessing the discrimination level of ML models in the presence of several SAs with multiple values. 
To this end, we introduce a novel fairness measure from a manifold perspective, named `\emph{\ppsdisfull{} (}\ppsdisabbr\emph{)}', with two optional versions (that is, maximum \ppsdisabbr{} and average \ppsdisabbr). 
However, the direct calculation of \ppsdisabbr{} lies on a core distance between two sets, which might be pretty costly. 
Therefore, we further propose two approximation algorithms that quickly estimate the distance between sets, named as `\emph{\ppsalgfull{} (}\ppsalgabbr\emph{)}' and `\emph{\ppsuperfull{} (}\ppsuperabbr\emph{)}' respectively, 
in order to speed up the calculation, accelerate the bias evaluation, and broaden its practical applicability. 
Furthermore, we also investigate their algorithmic properties under certain reasonable assumptions, in other words, how effective they could be in achieving the approximation goal. 
Our contribution in this work is four-fold:\looseness=-1
\begin{itemize}
\item We propose a fairness measure named \ppsdisabbr{} that could reflect the discrimination level of classifiers even simultaneously facing several SAs with multiple values. Note that \ppsdisabbr{} has two optional versions, of which both are built upon a concept of distances between sets from the manifold perspective. 
\item We propose two approximation algorithms (that is, \ppsalgabbr{} and \ppsuperabbr{}) that accelerate the estimation of distances between sets, to mitigate the disadvantage of costly direct calculation of \ppsdisabbr{}. 
\item We further investigate the algorithmic effectiveness of \ppsalgabbr{} under certain assumptions and provide detailed explanations. 
\item Comprehensive experiments are conducted to demonstrate the effectiveness of the proposed \ppsdisabbr{} and approximation algorithms. 
\end{itemize}

\section{Related Work}

In this section, we firstly introduce existing techniques to enhance fairness and then summarise available metrics to measure fairness for ML models in turn.

\subsection{Techniques to enhance fairness}
Existing mechanisms to mitigate biases and enhance fairness in ML models can typically be divided into three types: pre-processing, in-processing, and post-processing mechanisms, based on when manipulations are applied during model training pipelines. 
Particularly, recent work on in-processing fairness for DL models mainly falls under two types of approaches: constraint-based and adversarial learning methods \cite{tian2024multifair}. 
Constraint-based methods usually incorporate fairness metrics directly into the model optimisation objectives as constraints or regularisation terms. 
For instance, Zemel \etal{} \cite{zemel2013learning}, the pioneer in this direction, put demographic parity constraints on model predictions. 
Subsequent work also includes using approximations \cite{xu21to} or modified training schemes \cite{padala2021fnnc} to improve scalability. 
Adversarial methods intend to learn representations as fairly as possible by removing sensitive attribute information. 
In such procedures, additional prediction heads may be introduced for attribute subgroup predictions, and the information concerning sensitive attributes would be removed through inverse gradient updating \cite{wang2020mitigating,karkkainen2021fairface} or disentangling features \cite{jung2021fair,locatello2019fairness,sarhan2020fairness,guo2022learning}. 
Other fairness enhancing techniques include data augmentations \cite{mo2021object}, sampling \cite{roh2021fairbatch,khalili2021fair}, data noising \cite{zhang2021balancing}, dataset balancing with generative methods \cite{hwang2020unsupervised,joo2020gender,ramaswamy2021fair}, and reweighting mechanisms \cite{zhao2020maintaining,gong2021mitigating}. 
Recently, mixup operations \cite{verma2019manifold,zhang2018mixup,tian2024multifair} are adopted to enhance fairness by blending inputs across subgroups \cite{chuang2021fair,du2021fairness}. 
Yet most of these studies focus on protecting one sensitive attribute and are hardly able to handle several sensitive attributes all at once. 
And multi-attribute fairness protection remains relatively rarely explored.

\subsection{Existing fairness metrics and multi-attribute fairness protection}

The well-known fairness metrics are generally divided into group fairness---such as demographic parity (DP), equality of opportunity (EO), and predictive quality parity (PQP)---and individual fairness \cite{dwork2012fairness,berk2021fairness,vzliobaite2017measuring,joseph2016fairness,pleiss2017fairness}. 
The former mainly focuses on statistical/demographic equality among groups defined by sensitive attributes, while the latter cares more about the principle that `similar individuals should be evaluated or treated similarly.' 
However, satisfying fairness metrics all at once is hard to achieve because they are usually not compatible with each other \cite{barocas2023fairness}. 
In practice, it may need to deliberate on the choice of the specified distance in individual fairness \cite{joseph2016fairness,dwork2012fairness}. 
Moreover, the three commonly used group fairness measures (that is, DP, EO, and PQP) can only deal with one single sensitive attribute with binary values. 
Although extending them to scenarios of one sensitive attribute with multiple values is possible like statistical parity \cite{agarwal2019fair,jiang2020wasserstein}, they are still limited when facing several sensitive attributes at the same time. 
Recent work includes a newly proposed fairness measure named discriminative risk (DR) \cite{bian2023increasing_re} that is capable of capturing bias from both individual and group fairness aspects and two fairness frameworks (that is, InfoFair \cite{kang2022infofair} and MultiFair \cite{tian2024multifair}) to deliver fair predictions in face of multiple sensitive attributes. 
Yet these two fairness frameworks are not quantitative measures that directly evaluate the discrimination level of ML models.

\section{Methodology}
In this section, we formally study the measurement of fairness from a manifold perspective. Some standard notations that we use in this paper is listed in Table~\ref{tab:notation}. 
\begin{table}[h]
\centering
\caption{Mathematical notation summary}
\label{tab:notation}
\vspace{-1.2em}
\begin{tabular}{cl}
\hline
\textbf{Notation} & \textbf{Meaning} \\
\hline
$x$ & Scalar (italic lowercase letter) \\
$\bm{x}$ & Vector (bold lowercase letter) \\
$X$ & Matrix or set (italic uppercase letter) \\
$\mathsf{X}$ & Random variable (serif uppercase letter) \\
$\mathbb{R}$ & Real numbers \\
$\mathbb{Z}$, $\mathbb{Z}^+$ & Integers, and positive integers \\
$\mathbb{P}(\cdot)$ & Probability measure \\
$\mathbb{E}(\cdot)$ & Expectation \\
$\mathbb{V}(\cdot)$ & Variance of a random variable \\
$\mathbb{I}(\cdot)$ & Indicator function \\
$\mathcal{F}$ & Hypothesis space \\
$f(\cdot)$ & Models in one hypothesis space \\
$[n]$ & The set $\{1,2,...,n\}$ for brevity\\
\hline
\end{tabular}
\end{table}

%
%
%

Furthermore, we define:
\begin{itemize}
    \item $S = \{(\bm{x}_i,y_i)\}_{i=1}^n$ denotes a dataset with i.i.d. instances drawn from a feature-label space $\mathcal{X} \times \mathcal{Y}$ based on an unknown distribution.
    
    \item The feature (or input) space $\mathcal{X}$ is arbitrary, while the label (or output) space $\mathcal{Y} \!=\! \{1,2,\ldots,n_c\}$ ($n_c \geqslant 2$) is finite, supporting both binary and multi-class classification.
    
    \item For dataset $S$ containing sensitive attributes (SAs), each instance 
    is represented as $\bm{x}\! \defineq \! (\xneg,\xpos)$, where:
    \begin{itemize}
        \item $\xneg = [x_1, x_2, \ldots, x_{n_x}]^{\mathsf{T}}$ represents non-sensitive features, and $n_x$ is the number of non-sensitive features;
        \item $\xpos = [a_1, a_2, \ldots, a_{n_a}]^{\mathsf{T}}$ represents SAs, and the number of SAs $n_a\geqslant 1$, allowing multiple attributes; and
        \item $a_i \in \mathbb{Z}_+$ for the $i$-th SA ($1 \leqslant i \leqslant n_a$), allowing both binary and multiple values.
    \end{itemize}
    
    \item A hypothesis function $f \in \mathcal{F}: \mathcal{X} \mapsto \mathcal{Y}$ maps from the feature space to the label space, where:
    \begin{itemize}
        \item $\mathcal{F}$ is the hypothesis space, and
        \item $f(\bm{x})$ or $\hat{y}$ denotes the prediction for instance $\bm{x}$.
    \end{itemize}
\end{itemize}

\subsection{Model fairness assessment from a manifold perspective}
\label{sec:method,dist}

Given the dataset $S\!=\!\{ (\xneg_i,\xpos_i,y_i) \mid i\!\in\![n]\}$ composed of instances including SAs, here we denote one instance by $\insx\!=\!(\xneg,\xpos)\!=\! [x_1,...,x_{n_x}, a_1,...,a_{n_a}]^\mathsf{T}$ for clarity, where $n_a$ is the number of sensitive/protected attributes and $n_x$ is that of non-sensitive/unprotected attributes in $\insx$. 
In this paper, we introduce new fairness measures in scenarios for several SAs with multiple possible values. 
Note that the proposed fairness measure here is extended from our previous work---a fairness measure in scenarios for SAs with binary values~\cite{bian2024does}.

\subsubsection{Distance between sets for one bi-valued SA, from our previous work \cite{bian2024does}}

Inspired by the principle of individual fairness---similar treatment for similar individuals, \emph{if we view the instances (with the same SAs) as data points on certain manifolds, the manifold representing members from the marginalised/unprivileged group(s) is supposed to be as close as possible to that representing members from the privileged group}. 
To measure the fairness with respect to the SA, we have proposed a fairness measure that is inspired by 
`the distance of sets' introduced in mathematics.\footnote{
The distance between sets used here is known as the Hausdorff distance (HD), denoted as $\newdist_H$. It measures 
the distance between two non-empty subsets (namely $X$ and $Y$) in a metric space 
$(\mathcal{M},\newdist)$, it is defined as $$
\topequation\footnotesize
\newdist_H \defineq \max\big\{
     \sup_{x\in X} \inf_{y\in Y} \newdist(x,y), \\
     \sup_{y\in Y} \inf_{x\in X} \newdist(x,y)
    \big\} \,.$$ When dealing with discrete sets, we may replace supremum and infimum with maximum and minimum respectively.

We chose it over any other distance for two reasons. One is that if a whole data space is divided by one bi-valued SA, the obtained subspaces are assumed to be disjoint, and the whole data space is viewed as a Hausdorff space, a premise when using the HD. The other is that the HD is the greatest of all the distances from one point in one set to the closest point in the other set, which helps us not be misled by underestimated or overly optimistic results, considering that we were trying to capture the discrimination within. 
} 
For a certain bi-valued SA $a_i\!\in\!\mathcal{A}_i \!=\!\{0,1\}$, $S$ can be divided into two subsets $S_1\!=\! \{(\insx,y)\!\in S\mid a_i\!=\!1\}$ and $\bar{S}_1\!=\! S\setminus S_1 \!=\! \{(\insx,y)\!\in S\mid a_i\!\neq\! 1\}$, where $a_i\!=\!1$ means the corresponding instance is a member of the privileged group. 
%
Then given a specific distance metric $\newdist(\cdot,\cdot)$\footnote{%
Here we use the standard Euclidean metric. In fact, any two metrics $\newdist_1,\newdist_2$ derived from norms on the Euclidean space $\mathbb{R}^d$ are equivalent in the sense that there are positive constants $c_1,c_2$ such that $c_1\newdist_1(x,y)\leqslant \newdist_2(x,y)\leqslant c_2\newdist_1(x,y)$ for all $x,y\in \mathbb{R}^d$.
} on the feature space, our previous distance between these two subsets (that is, $S_1$ and $\bar{S}_1$) is defined by
\begin{equation}
\topequation\small
\begin{split}
\newDist(S_1,\bar{S}_1) \defineq
\max\big\{
& \max_{(\insx,y)\in S_1} \min_{(\insx',y')\in \bar{S}_1} \newdist\big( (\xneg,y),({\xneg}',y') \big) ,\\
& \max_{(\insx',y')\in \bar{S}_1} \min_{(\insx,y)\in S_1} \newdist\big( (\xneg,y),({\xneg}',y') \big) 
\big\} \,,\label{eq:1}
\end{split}
\end{equation}%
%
and it is viewed as the distance between the manifolds of marginalised group(s) and that of the privileged group. 
Notice that this distance satisfies three basic properties: identity, symmetry, and triangle inequality.\footnote{%
Notice that the distance defined in Eq.~\eqref{eq:1} satisfies the following basic properties: 
1) For any two data sets $S_1$ and $S_2\in \mathcal{X\times Y}$, $\newDist(S_1,S_2)=0$ if and only if $S_1$ equals $S_2$, also known as identity; 
2) For any two sets $S_1$ and $S_2$, $\newDist(S_1,S_2)= \newDist(S_2,S_1)$, also known as symmetry; and 
3) For any sets $S_1$, $S_2$, and $S_3$, we have the triangle inequality $\newDist(S_1,S_3)\leqslant \newDist(S_1,S_2)+ \newDist(S_2,S_3)$. 
} Analogously, for a trained classifier $f(\cdot)$, we can calculate
\begin{equation}
\topequation
\small\begin{split}
    \newFist(S_1,\bar{S}_1)= \max\big\{
    & \max_{(\insx,y)\in S_1} \min_{(\insx',y')\in \bar{S}_1} \newdist\big( (\xneg,\hat{y}),({\xneg}',\hat{y}') \big) ,\\
    & \max_{(\insx',y')\in \bar{S}_1} \min_{(\insx,y)\in S_1} \newdist\big( (\xneg,\hat{y}),({\xneg}',\hat{y}') \big)
    \big\} \,.
\end{split}\label{eq:2}
\end{equation}%
%
For simplification, we could rewrite Equations~\eqref{eq:1} and \eqref{eq:2} as
\begin{equation}
\topequation
\small\begin{split}
\extBist \defineq \max\big\{
    & \max_{(\insx,y)\in S_1} \min_{(\insx',y')\in \bar{S}_1} 
    \extprev{\newy},\\
    & \max_{(\insx',y')\in \bar{S}_1} \min_{(\insx,y)\in S_1} 
    \extprev{\newy}
\big\} \,,
\end{split}\label{eq:3}
\end{equation}%
by recording the true label $y$ and the prediction $\hat{y}$ as one denotation (say $\newy$). 
%
We will continue using the above notations in the subsequent context.

\subsubsection{Distance between sets for one multi-valued SA}
As for the scenarios where only one SA exists, 
let $\xpos=[a_i]^\mathsf{T}$ be a single SA, in other words, $n_a\!=\!1$, $a_i\!\in\!\mathcal{A}_i \!=\! \{1,2,...,n_{a_i}\}$, 
$n_{a_i}\!\geqslant 3$, and $n_{a_i}\!\in\! \mathbb{Z}_+$. 
Then the original dataset $S$ can be divided into a few disjoint sets according to the value of this attribute $a_i$, that is, $S_j=\{(\insx,y)\in S\mid a_i=j\}, \forall j\in\mathcal{A}_i$. 
%
We can now extend Eq.~\eqref{eq:3} and introduce the following distance measures: 
(i) \emph{maximal distance measure for one SA,}
\begin{equation}
\topequation\small 
\extDist{} \!\defineq\!
\max_{1\leqslant j\leqslant n_{a_i}} 
\!\! \big\{
    %
    \max_{(\insx,y)\in S_j} \!
    \min_{(\insx',y')\in \bar{S}_j}
    \! \extprev{\newy}
\big\}
\,,\label{eq:4a}
\end{equation}
and (ii) \emph{average distance measure for one SA,}
\begin{equation}
\topequation\small
\textstyle
\extDist{avg} \!\defineq\!
\frac{1}{n}\sum_{j=1}^{n_{a_i}} 
%
\sum_{(\insx,y)\in S_j} 
\min_{(\insx',y')\in \bar{S}_j}
\extprev{\newy} 
\,,\label{eq:4b}
\end{equation} 
where $\bar{S}_j\!= S\!\setminus\! S_j$. 
Notice a special case that $\extDist{} \!=\! \extBist$ when $\mathcal{A}_i \!=\! \{0,1\}$. 

\subsubsection{Distance between sets for several multi-valued SAs}
Now we discuss the general case, where we have several SAs $\xpos\!=\![a_1,a_2,...,a_{n_a}]^\mathsf{T}$ and each $a_i\!\in\!\mathcal{A}_i\!=\!\{1,2,..,n_{a_i}\}$, where $n_{a_i}$ is the number of values for this SA $a_i\, (1\!\leqslant\! i\!\leqslant\! n_a)$. 
We can now introduce the following generalised distance measures: 
(i) \emph{maximal distance measure for multiple SAs,}
\begin{equation}
\topequation\textstyle
    \small \extEist{}\defineq 
    \max_{1\leqslant i\leqslant n_a}
    \extDist{} \,,\label{eq:6a}
\end{equation}
and (ii) \emph{average distance measure for multiple SAs,}
\begin{equation}
\topequation\textstyle
    \small \extEist{avg} \defineq
    \frac{1}{n_a} \sum_{i=1}^{n_a}
    \extDist{avg} \,.\label{eq:6b}
\end{equation}

\textbf{Remarks.}
\begin{enumerate}[label=(\arabic*)]
\item It is easy to see that $\extEist{} \geqslant \extEist{avg}$.
\item Both $\extDist{}$ and $\extDist{avg}$ measure the fairness regarding the SA $a_i$. 
\item As their names suggest, the maximal distance represents the largest possible disparity between instances with different SAs, while the average distance reflects the average disparity between instances with different SAs. 
The formal distance measures are more stringent, they are susceptible to data noise. 
In contrast, the latter type of distance measures are more resilient against the influence of data noise.\looseness=-1 
\end{enumerate}

We remark that $\newDist_{\xpos}(S)$, $\newDist_{\xpos}^\text{avg}(S)$ reflect the biases from the data and that $\newDist_{f,\xpos}(S)$, $\newDist_{f,\xpos}^\text{avg}(S)$ reflect the biases from the learning algorithm. 
Then, the following values could be used to capture the intensity of variation between them, \ie{}
\begin{subequations}
\small
\begin{align}
\newfist(f)&= \log\left(
    \frac{\newDist_{f,\xpos}(S)}{\newDist_{\xpos}(S)}
\right) \,,\label{eq:8a}\\
\newfist^\text{avg}(f) &=
\log\left( \frac{ 
    \newDist_{f,\xpos}^\text{avg}(S) }{ 
    \newDist_{\xpos}^\text{avg}(S) 
}\right) \,.\label{eq:8b}
\end{align}%
\label{eq:8}%
\end{subequations}%
That is to say, we use them to capture the extra discrimination introduced in the learning procedure, reflecting the fairness degree of this classifier. 
We name the fairness degrees defined as above of one classifier by Eq.~\eqref{eq:8} as `\emph{maximum \ppsdisfull{} (}\ppsdisabbr\emph{)}' and `\emph{average} \ppsdisabbr{}', respectively.

\subsection{An efficient approximation of distances between sets for Euclidean spaces}
To reduce the high computational complexity ($\mathcal{O}(n^2)$) of directly calculating Equations~\eqref{eq:4a} and \eqref{eq:4b},\footnote{
The original time complexity of the direct computation in Eq.~\eqref{eq:3} given a dataset of size $n$ is $\mathcal{O}(2n_1(n-n_1))$ where $n_1$ is the size of $S_1$, that is, $\mathcal{O}(n^2)$. 
By extension, the time complexity of the direct computation in Eq.~(4) and (5) is $\mathcal{O}(\sum_{j=1}^{n_{a_i}} n_j(n-n_j))$ where $n_j$ is the size of $S_j$ for all $j\in\{1,2,...,n_{a_i}\}$, that is, $\mathcal{O}(n^2)$ as well. 
} we propose an efficient algorithm that can be viewed as a modification of an $\mathcal{O}(n\log n)$-algorithm in our previous work \cite{bian2024does}. 

We start by recalling the algorithm introduced in \cite{bian2024does}. 
Since the core operation in Equations~\eqref{eq:4a} and \eqref{eq:4b} is to evaluate the distance between data points inside $\mathcal{X\times Y}$, to reduce the number of distance evaluation operations involved in Eq.~\eqref{eq:4a} and \eqref{eq:4b}, we observe that the distance between similar data points tends to be closer than others after projecting them onto a general one-dimensional linear subspace (refer \cite[Lemma 1]{bian2024does}). 
To be concrete, let $g:\mathcal{X\times Y}\mapsto\mathbb{R}$ be a random projection, then we could write $g$ as
\begin{equation}
\topequation
    \small g(\insx,\newy;\vecw)=
    g(\xneg,\xpos,\newy;\vecw)=
    [\newy,x_1,...,x_{n_x}]^\mathsf{T}
    \vecw \,,
    \label{eq: uniformized projection}
\end{equation}%
where $\vecw\!=\! [w_0,w_1,...,w_{n_x}]^\mathsf{T}$ is a non-zero random vector. 
Now, we choose a random projection $g\!:\!\mathcal{X\!\times\! Y}\!\mapsto\!\mathbb{R}$, then we sort all the projected data points on $\mathbb{R}$. 
According to \cite[Lemma 1]{bian2024does}, it is likely that for the instance $(\insx,y)$ in $S_j$, the desired instance 
$\argmin_{(\insx',y')\in \bar{S}_j} \extprev{y}$ would be somewhere near it after the projection, and vice versa. 
Thus, by using the projections in Eq.~\eqref{eq: uniformized projection}, 
we could accelerate the process in Eq.~\eqref{eq:4a} and \eqref{eq:4b} by checking several adjacent instances rather than traversing the whole dataset.

In this paper, instead of taking one random vector each time, we now take a few orthogonal random vectors each time and do the above process for all these orthogonal vectors. 
The number of these orthogonal vectors could be $n_x+1$, or smaller (such as two or three), if the practitioners would like to save more time in practice. 
For instance, we set two orthogonal random vectors\footnote{%
In practice, we first generate a ($1+n_x$)-dimensional matrix randomly (for example, from the standard normal distribution), and then use the QR decomposition to get $Q$---an orthogonal matrix. The columns of $Q$ are orthogonal unit vectors, and the first two (or three, depending on the number of orthogonal vectors that the practitioner likes) columns will be used as the orthogonal vectors in line~\ref{line:app,2} of Algorithm~\ref{alg:approx}.\looseness=-1 
} in Algorithm~\ref{alg:approx} at present. 
Then we take the minimum among all estimated distances. 
This modification may slightly increase the time cost of approximation a bit compared with our previous work \cite[Algorithm 1]{bian2024does}, yet will still significantly accelerate the execution speed and the effectiveness of the projection algorithm, compared with the direct calculation of distances.

\begin{algorithm}[t]\small
\caption{\small
\ppsuperfull, \aka{} \texttt{\ppsuperabbr}$(\{(\xneg_i,\xpos_i)\}_{i=1}^n, \{\newy_i\}_{i=1}^n; m_1,m_2)$, 
}\label{alg:extend}
\begin{algorithmic}[1]
\REQUIRE Dataset $S=\{(\insx_i,y_i)\}_{i=1}^n =\{(\xneg_i,\xpos_i,y_i)\}_{i=1}^n$ where $\xpos_i=[a_{i,1},a_{i,2},...,a_{i,n_a}]^\mathsf{T}$, 
prediction of $S$ by the classifier $f(\cdot)$ that has been trained, that is, $\{\hat{y}_i\}_{i=1}^n$, and two hyperparameters $m_1$ and $m_2$ as the designated numbers for repetition and comparison respectively
\ENSURE Approximation of $\extEist{}$ and $\extEist{avg}$
\FOR{$j$ from $1$ to $n_a$}
\label{line:sup,1}
    \STATE $d_\text{max}^{(j)}, d_\text{avg}^{(j)}=$ \texttt{\ppsalgabbr}$(\{(\xneg_i,a_{i,j})\}_{i=1}^n, \{\newy_i\}_{i=1}^n; m_1,m_2)$
\ENDFOR
\label{line:sup,3}
\RETURN $\max_{1\leqslant j\leqslant n_a} \{d_\text{max}^{(j)} \mid j\in[n_a]\}$ and $\frac{1}{n_a} \sum_{j=1}^{n_a} d_\text{avg}^{(j)}$
\end{algorithmic}
\end{algorithm}

\begin{algorithm}[t]\small
\caption{\small
\ppsalgfull{}, \aka{} 
\texttt{\ppsalgabbr}$(\{(\xneg_i,a_i)\}_{i=1}^n$, $\{\newy_i\}_{i=1}^n; m_1,m_2)$
}\label{alg:approx}
\begin{algorithmic}[1]
\REQUIRE Dataset $S=\{(\insx_i,y_i)\}_{i=1}^n =\{(\xneg_i,\xpos_i,y_i)\}_{i=1}^n$, prediction of $S$ by the classifier $f(\cdot)$ that has been trained, that is, $\{\hat{y}_i\}_{i=1}^n$, and hyperparameters $m_1$, $m_2$ as the designated number for repetition and comparision
\ENSURE Approximation of $\extDist{}$ and $\extDist{avg}$
\FOR{$j$ from $1$ to $m_1$}
\label{line:app,1}
    \STATE Take two orthogonal vectors $\vecw_0$ and $\vecw_1$ where each $\vecw_k\in[-1,+1]^{1+n_x} \,(k=\{0,1\})$ 
    \label{line:app,2}
    \FOR{$k$ from $0$ to $1$}
    \STATE $t_\text{max}^k, t_\text{avg}^k=\!$\,\texttt{\ppssubabbr}$(\{(\xneg_i,a_i)\}_{i=1}^n, \{\newy_i\}_{i=1}^n, \vecw_k;m_2)$
    \ENDFOR
    \label{line:app,5}
    \STATE $d_\text{max}^j= \min\{ t_\text{max}^k \mid k\in\{0,1\} \} =\min\{t_\text{max}^0, t_\text{max}^1\}$
    \STATE $d_\text{avg}^j\,= \min\{ t_\text{avg}^k \mid k\in\{0,1\} \} \,=\min\{t_\text{avg}^0, t_\text{avg}^1\}$
\ENDFOR
\label{line:app,8}
\RETURN $\min\{d_\text{max}^j \mid j\in[m_1]\}$ and $\frac{1}{n} \min\{d_\text{avg}^j \mid j\in[m_1]\}$ \label{line:alg2,l9}
\end{algorithmic}
\end{algorithm}

\begin{algorithm}[t]\small
\caption{\small
\ppssubfull{} in approximation, \aka{} 
\texttt{\ppssubabbr}$(\{(\xneg_i,a_i)\}_{i=1}^n, \{\newy_i\}_{i=1}^n, \vecw;m_2)$
}\label{alg:accele}
\begin{algorithmic}[1]
\REQUIRE Data points $\{(\xneg_i,a_i)\}_{i=1}^n$, its corresponding value $\{\newy_i\}_{i=1}^n$, where $\newy_i$ could be its true label $y_i$ or prediction $\hat{y}_i$ by the classifier $f(\cdot)$, a random vector $\vecw$ for projection, and a hyperparameter $m_2$ as the designated number for comparison
\ENSURE Approximation of $\extDist{}$ and $n\extDist{avg}$
\STATE Project data points onto a one-dimensional space based on Eq.~\eqref{eq: uniformized projection}, in order to obtain $\{g(\insx_i,\newy_i;\vecw)\}_{i=1}^n$
\label{line:acc,1}
\STATE Sort original data points based on $\{g(\insx_i,\newy_i;\vecw)\}_{i=1}^n$ as their corresponding values, in ascending order
\label{alg:acc,l2}
\FOR{$i$ from $1$ to $n$}
\label{line:acc,3}
    \STATE Set the anchor data point $(\insx_i,\newy_i)$ in this round
    \label{line:acc,4}
\LineComment{If $a_i=j$ (marked for clarity), in order to approximate $\min_{(\insx',y')\in \bar{S}_j} \newdist\big( (\xneg_i,\newy_i),({\xneg}',\newy') \big)$ }
    \STATE Compute the distances $\newdist((\xneg_i,\newy_i),\cdot)$ for at most $m_2$ nearby data points that meets $a\neq a_i$ and $g(\xneg,\newy;\vecw) \leqslant g(\xneg_i,\newy_i;\vecw)$
    \STATE Find the minimum among them, recorded as $d_\text{min}^s$
    \STATE Compute the distances $\newdist((\xneg_i,\newy_i),\cdot)$ for at most $m_2$ nearby data points that meets $a\neq a_i$ and $g(\insx,\newy;\vecw) \geqslant g(\insx_i,\newy_i;\vecw)$
    \STATE Find the minimum among them, recorded as $d_\text{min}^r$
    \STATE $d_\text{min}^{(i)}= \min\{d_\text{min}^s, d_\text{min}^r\}$
    \label{line:acc,9}
\ENDFOR
\label{line:acc,11}
\RETURN $\max \{d_\text{min}^{(i)} \mid i\in[n] \}$ and $\sum_{i=1}^n d_\text{min}^{(i)}$
\end{algorithmic}
\end{algorithm}

Then we could propose an approximation algorithm to estimate the distance between sets in Equations~\eqref{eq:4a} and \eqref{eq:4b}, named as `\emph{\ppsalgfull{} (}\ppsalgabbr\emph{)}', shown in Algorithm~\ref{alg:approx}. 
As for the distance in Equations~\eqref{eq:6a} and \eqref{eq:6b}, we propose `\emph{\ppsuperfull{} (}\ppsuperabbr\emph{)}', shown in Algorithm~\ref{alg:extend}. 
Note that there exists a sub-route within \ppsalgabbr{} to obtain an approximated distance between sets, which is named as `\emph{\ppssubfull{} (}\ppssubabbr\emph{)}' and shown in Algorithm~\ref{alg:accele}. 
As the time complexity of sorting in line~\ref{alg:acc,l2} of Algorithm~\ref{alg:accele} could reach $\mathcal{O}(n\log n)$, we could get the computational complexity of Algorithm~\ref{alg:accele} as follows: 
i) The complexity of line~\ref{line:acc,1} is $\comp(n)$; and 
ii) The complexity from line~\ref{line:acc,4} to line~\ref{line:acc,9} is $\comp(2m_2+1)$. 
Thus the overall time complexity of Algorithm~\ref{alg:accele} would be $\comp(n(\log n+m_2+1))$, 
and that of Algorithm~\ref{alg:approx} be $\comp(m_1n(\log n+m_2))$, and that of Algorithm~\ref{alg:extend} be $\comp(n_am_1n(\log n+m_2))$. 
As both $m_1$ and $m_2$ are the designated constants, 
and $n_a$ is also a fixed constant for one specific dataset, 
the time complexity of computing the distance is then down to $\comp(n\log n)$, which is more welcome than $\comp(n^2)$ for the direct computation in Section~\ref{sec:method,dist}.

It is worth noting that in line \ref{line:alg2,l9} of Algorithm~\ref{alg:approx}, we use the minimal instead of their average value. 
The reason is that in each projection, the exact distance for one instance would not be larger than the calculated distance for it via \ppssubabbr{}; and the same observation holds for all of the projections in \ppsalgabbr{}. 
Thus, the calculated distance via \ppsalgabbr{} is always no less than the exact distance, and the minimal operator should be taken finally after multiple projections.

\subsection{Algorithmic effectiveness analysis of \ppsalgabbr}
\label{subsec:method,analysis}
As \ppsalgabbr{} in Algorithm~\ref{alg:approx} is the core component devised to facilitate the approximation of direct calculation of the distance between sets, in this subsection, we detail more about its algorithmic effectiveness under some conditions.

In higher dimensional statistics, it is known that if we take a random projection $A$ from a n-dimensional space to a $k$-dimensional subspace, then for every vector $\mathbf{x}$, we have the following probability estimation:
\begin{equation}
\topequation\small 
\mathbb{P}\left(\left|\frac{|A\mathbf{x}|}{|\mathbf{x}|} - \sqrt{\frac{k}{n}}\right| \leq \varepsilon\sqrt{\frac{k}{n}}\right) \geq 1-C\exp\{-C k\varepsilon^2\} \,,
    \label{eq: higher dim statistics}
\end{equation}%
where $C$ is a universal constant independent of the dimensional $n,k$ and the projection $A$. In particular, when $k=1$, we have proved a simplified version of \eqref{eq: higher dim statistics} (refer to \cite[Lemma 1]{bian2024does}), restated in Lemma~\ref{projection 1}.  These results support the following observation: \emph{`the distance between similar data points tends to be closer than others after projecting them onto a general one-dimensional linear subspace.'}
It demonstrates by Eq.~\eqref{eq: ineq conclusion} that the probability $\mathbb{P}(\bm{v}_1,\bm{v}_2)$ also goes to zero when the ratio $\sfrac{r_1}{r_2}$ goes to zero. 
Additionally, it is easy to observe that $\mathbb{P}(\bm{v}_1,\bm{v}_2)$ reaches the same order of magnitude as $\sfrac{r_1}{r_2}$, and especially, when $r_1$ equals $r_2$, $\mathbb{P}(\bm{v}_1,\bm{v}_2)$ could be roughly viewed as $\sfrac{1}{2}$ for coarse approximation. 
It means that the breaking probability of the aforementioned statement---similar data points leading to closer distances---tends to increase as $r_1$ gradually gets closer to $r_2$. 
And the profound meaning behind Lemma~\ref{projection 1} is that the bigger the gap of lengths between $\bm{v}_1$ and $\bm{v}_2$ is, the more effective and efficient our proposed approximation algorithms would be.

\begin{lemma}[Lemma 1 \cite{bian2024does}]
\label{projection 1}
Let $\bm{v}_1$ (resp. $\bm{v}_2$) be a vector in the $n$-dimensional Euclidean space $\mathbb{R}^n$ with length $r_1$ (resp. $r_2$) such that $r_1\leqslant r_2$. Let $\vecw\subset \mathbb{R}^n$ be a unit vector. We define $\mathbb{P}(\bm{v}_1,\bm{v}_2)$ as the probability that $|\langle \vecw,\bm{v}_1\rangle|\geqslant |\langle \vecw,\bm{v}_2\rangle|$. Then,
\begin{equation}\label{eq: ineq conclusion}
\topequation\small 
\frac{\sin\phi}{\pi} \cdot \frac{r_1}{r_2} 
\leqslant \mathbb{P}(\bm{v}_1,\bm{v}_2) \leqslant
\bigg( 1+\frac{r_1^2}{r_2^2}
\bigg)^{-\sfrac{1}{2}} \cdot\frac{r_1}{r_2} \,,
\end{equation} where $\phi$ represents the angle between $\bm{v}_1$ and $\bm{v}_2$\,.
\end{lemma}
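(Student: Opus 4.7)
The plan is to reduce to a two-dimensional computation in the plane $\Pi=\mathrm{span}(\bm{v}_1,\bm{v}_2)$, obtain $\mathbb{P}(\bm{v}_1,\bm{v}_2)$ in closed form, and then sandwich the resulting expression between the two stated bounds using elementary trigonometric inequalities, with Jordan's inequality doing the heavy lifting on the upper side.

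First, the event $|\langle\vecw,\bm{v}_1\rangle|\geqslant|\langle\vecw,\bm{v}_2\rangle|$ depends only on the orthogonal projection of $\vecw$ onto $\Pi$, since $\langle\vecw,\bm{v}_i\rangle=\langle\vecw_\Pi,\bm{v}_i\rangle$ for $i=1,2$. By the rotational symmetry of the uniform law on $S^{n-1}$, the angular direction of $\vecw_\Pi$ is uniform on the unit circle in $\Pi$, so I may assume $n=2$, $\bm{v}_2=(r_2,0)$, $\bm{v}_1=(r_1\cos\phi,r_1\sin\phi)$, and $\vecw=(\cos\theta,\sin\theta)$ with $\theta$ uniform on $[0,2\pi)$.

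Next, I would expand $\langle\vecw,\bm{v}_1\rangle^2\geqslant\langle\vecw,\bm{v}_2\rangle^2$ using $\cos^2 x=(1+\cos 2x)/2$ and gather terms to rewrite it as $R\cos(2\theta-\delta)\geqslant r_2^2-r_1^2$, where $R=\sqrt{r_1^4+r_2^4-2r_1^2 r_2^2\cos 2\phi}$ and $\delta$ is a suitable phase. Since $2\theta$ is uniform modulo $2\pi$, the probability of this event equals $\alpha/\pi$ with $\alpha=\arccos((r_2^2-r_1^2)/R)\in[0,\pi/2]$. A brief computation then yields the cleaner forms $\sin\alpha=2r_1 r_2\sin\phi/R$ and (for $r_1<r_2$) $\tan\alpha=2r_1 r_2\sin\phi/(r_2^2-r_1^2)$.

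For the lower bound, I would use $\alpha\geqslant\sin\alpha$ together with $R^2\leqslant(r_1^2+r_2^2)^2\leqslant 4r_2^4$, which follows from $\cos 2\phi\geqslant-1$ and $r_1\leqslant r_2$, yielding $\alpha\geqslant\sin\alpha=2r_1 r_2\sin\phi/R\geqslant r_1\sin\phi/r_2$ and hence $\mathbb{P}(\bm{v}_1,\bm{v}_2)\geqslant(\sin\phi/\pi)(r_1/r_2)$. For the upper bound, I would observe that $\tan\alpha$ is monotone in $\sin\phi$, so $\alpha$ is maximized at $\phi=\pi/2$, where the double-angle identity gives $\alpha_{\max}=\arctan(2r_1 r_2/(r_2^2-r_1^2))=2\arctan(r_1/r_2)$. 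Setting $\beta=\arctan(r_1/r_2)\in[0,\pi/4]$, Jordan's inequality $\sin\beta\geqslant 2\beta/\pi$ combined with $\sin\beta=(r_1/r_2)/\sqrt{1+r_1^2/r_2^2}$ gives $\mathbb{P}(\bm{v}_1,\bm{v}_2)=\alpha/\pi\leqslant 2\beta/\pi\leqslant\sin\beta=(1+r_1^2/r_2^2)^{-1/2}(r_1/r_2)$.

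The main obstacle will be the routine but error-prone algebraic manipulation to obtain the closed form $\mathbb{P}(\bm{v}_1,\bm{v}_2)=(1/\pi)\arccos((r_2^2-r_1^2)/R)$, in particular verifying the identity $R^2-(r_2^2-r_1^2)^2=4r_1^2 r_2^2\sin^2\phi$ that underlies the $\sin\alpha$ formula and confirming $\alpha\in[0,\pi/2]$ so that $\sin\alpha\leqslant\alpha$ applies; after that, the two bounds reduce to one-line trigonometric inequalities and a monotonicity argument in $\phi$.
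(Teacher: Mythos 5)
Your proof is correct and follows essentially the same route as the paper's: both reduce the event to a two-dimensional angular condition, express $\mathbb{P}(\bm{v}_1,\bm{v}_2)$ as $\theta/\pi$ for an angle $\theta$, and sandwich it via $\frac{2}{\pi}\theta\leqslant\sin\theta\leqslant\theta$. Your angle $\alpha=\arccos\big((r_2^2-r_1^2)/R\big)$ is exactly the paper's acute angle between $\bm{v}_2-\bm{v}_1$ and $\bm{v}_1+\bm{v}_2$ (since $\lVert\bm{v}_2-\bm{v}_1\rVert\,\lVert\bm{v}_1+\bm{v}_2\rVert=R$), so the two derivations differ only in how the trigonometric bookkeeping is organized.
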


Our main result in this subsection is Proposition~\ref{density,alt} (a modified version of \cite[Proposition 2]{bian2024does}), whereby Eq.~\eqref{eq: probability}, the efficiency of \ppsalgabbr{} decreases as the scaled density $\mu$ of the original dataset increases. 
Meanwhile, when dealing with large-scale datasets, the more insensitive attributes we have, the more efficient \ppsalgabbr{} is. 
In general, the efficiency of \ppsalgabbr{} depends on the shape of these two subsets of $S$. 
Roughly speaking, the more separated these two sets are from each other, the more efficient \ppsalgabbr{} is.\looseness=-1

\begin{proposition}
\label{density,alt}
Let $S\!\!=\! \{(\bm{x}_i,y_i)\}_{i=1}^n \!\subset\! \mathcal{X}\!\times \!\mathcal{Y}$ be a $(k\!+\!1)$-dimensional dataset where instances have $(k\!+\!1)$ features, 
an evenly distributed dataset with a size of $n$ that is a random draw of the feature-label space $\mathcal{X\!\times\! Y}$. 
For any two subsets of $S$ with distance $d$ (ref. Eq.~\eqref{eq:3} and \eqref{eq:4a}), suppose further that the scaled density    
\begin{equation}\label{eq: scaled density}
    \topequation\small
    \limsup_{\mathbf{B}\subset \mathbb{R}^{k+1} \text{ an Euclidean ball}}\frac{1}{\mathrm{Vol}(\mathbf{B})}\#(\mathbf{B}\cap S)= \frac{\mu}{\mathrm{Vol}(\mathbf{B}(d))}\,,
\end{equation}
for some positive real number $\mu$ (here $\#$ denotes the number of points of a finite set and $\mathbf{B}(d)$ denotes a ball of radius $d$). Then, with probability at least 
\begin{equation}\label{eq: probability}
\topequation\small
     1-\bigg(\frac{\pi\mu}{m_2\mathrm{Vol}(\mathbf{B}(1))} 
     \Big(\Big(1+\frac{n}{\mu}\Big)^{\frac{1}{k+1}}-\alpha\Big)\bigg)^{m_1}\,,
\end{equation}
\ppsalgabbr{} could reach an approximate solution that is at most $\alpha$ times of the distance between these two subsets.
\end{proposition}
\begin{proof}%
Let $S_1,S_2,...,S_{n_{a_i}}$ be $n_{a_i}$ sub-datasets of $S$. For each $j\in\{1,2,...,n_{a_i}\}$, then $S_j$ and $\bar{S}_j$ be two sub-datasets of $S$. 
We fix the instance $\bm{v}_j\in S_j$ such that $d\defineq \mathbf{D}_{\xpos}(S,a_i) =\mathbf{d}(\bm{v}_j,\bm{v}_k)$ for some $\bm{v}_k\in \bar{S}_j$. 
For simplicity, we may set $\bm{v}_j$ as the origin. 
The probability that an instance $\bm{v}\in \bar{S}_j$ has a shorter length than $\bm{v}_k$ after projection to a line (see Eq.~\eqref{eq: uniformized projection}) is denoted as $\mathbb{P}(\bm{v}_k,\bm{v})$. 
By assumption, we only need to consider those instances whose length is greater than $\alpha d$ (outside the ball $\mathbf{B}(\alpha d)$ centred at the origin). Hence, the desired probability is bounded from below by
\begin{equation}
\label{eq: probability 1}
\topequation\small
1-\bigg( \frac{1}{m_2}
\sum_{\bm{v}\notin \mathbf{B}(\alpha d)}
\mathbb{P}(\bm{v}_j,\bm{v})
\bigg)^{m_1} \,.
\end{equation}
However, Eq.~\eqref{eq: probability 1} is based on the extreme assumption that all instances lie on the same two-dimensional plane. In our case, the instances are evenly distributed. Hence, we may adjust the probability by multiplying 
\begin{equation}
\topequation\small 
\frac{\mathrm{Vol}(S^1(\frac{\lVert \bm{v}\rVert}{d}))}{\mathrm{Vol}(S^k(\frac{\lVert \bm{v}\rVert}{d}))}
=\frac{\Gamma(\frac{k+1}{2})}{\pi^{\frac{k-1}{2}}} \cdot 
\bigg( \frac{d}{\lVert \bm{v}\rVert} \bigg)^{k-1} \,,\nonumber
\end{equation}
where $\Gamma(\cdot)$ denotes the Gamma function and $\mathrm{Vol}(S^i(r))$ denotes the area of the $i$-th dimensional sphere of radius $r$. 
Hence, by Lemma~\ref{projection 1}, 
the desired probability is lower bounded by 
\begin{equation}
\label{eq: probability 11}
\topequation\small
    1-\bigg(\frac{1}{m_2} \sum_{\bm{v}\notin \mathbf{B}(\alpha d)} \bigg(1+\frac{d^2}{\lVert \bm{v}\rVert^2}\bigg)^{-\frac{1}{2}} \cdot
    \frac{\Gamma(\frac{k+1}{2})}{\pi^{\frac{k-1}{2}}}
    \cdot\bigg(\frac{d}{\lVert \bm{v}\rVert}\bigg)^k\bigg)^{m_1}\,.
\end{equation}
Under our assumption, Eq.~\eqref{eq: probability 11} attains the lowest value when the data are evenly distributed inside a hollow ball $\mathbf{B}_j\setminus \mathbf{B}(d)$ centred at $\bm{v}_j$. The radius of $\mathbf{B}_j$, denoted as $r_j$, satisfies 
\begin{equation}\label{eq: radius}
\topequation\small
    n-1=\mu\frac{\mathrm{Vol}(\mathbf{B}_j\setminus \mathbf{B}(d))}{\mathrm{Vol}(\mathbf{B}(d))}=\mu\Big(\Big(\frac{r_j}{d}\Big)^{k+1}-1\Big)\,.
\end{equation}
In this situation, we may write the summation part of Eq.~\eqref{eq: probability 11} as an integration. To be more specific, Eq.~\eqref{eq: probability 11} is lower bounded by
\begin{equation}\label{eq: probability 22}
\topequation\small
    1-\bigg(\frac{1}{m_2} \int_{\alpha d}^{r_j} A(x)\mu \mathrm{Vol}(S^{k}(x))dx \bigg)^{m_1}\,,
\end{equation}
where $A(x)=(1+\frac{d^2}{x^2})^{-\frac{1}{2}} \frac{\Gamma(\frac{k+1}{2})}{\pi^{(k-1)/2}} \cdot(\frac{d}{x})^k$. 
Moreover, Eq.~\eqref{eq: probability 22} can be simplified as 
\begin{equation}\label{eq: probability 33}
\topequation\small
    1-\bigg(\frac{1}{m_2\mathrm{Vol}(\mathbf{B}(1))} \int_{\alpha d}^{r_j} \frac{\pi\mu}{d}\cdot \frac{x}{\sqrt{x^2+d^2}} dx \bigg)^{m_1}\,.
\end{equation}
Combining Eq.~\eqref{eq: radius} and \eqref{eq: probability 33}, we conclude that the desired probability is lower bounded by 
\begin{equation}\label{eq: probability 44}
\topequation\small
1-\bigg(
\frac{\pi\mu}{m_2\mathrm{Vol}(\mathbf{B}(1))} 
\bigg( 
    \Big(\Big(1+\frac{n}{\mu}\Big)^{\frac{2}{k+1}} 
    +1\Big)^{\frac{1}{2}}-(\alpha^2+1)^{\frac{1}{2}}
\bigg)
\bigg)^{m_1}\,.
\end{equation}
And the proposition follows from Eq.~\eqref{eq: probability 44}.
\end{proof}

Now we discuss the choice of hyperparameters (\ie{} $m_1$ and $m_2$) according to Eq.~\eqref{eq: probability}. In fact, Eq.~\eqref{eq: probability} can be approximately written as $1-c\cdot n^{\frac{m_1}{k+1}}/m_2^{m_1}$. We can calculate the order of magnitude of $n^{\frac{m_1}{k+1}}/m_2^{m_1}$ by taking the logarithm: 
\begin{equation}\label{eq: magnitude}
\topequation\small
-\lambda\defineq \lg\Big(
n^{\frac{m_1}{k+1}} /m_2^{m_1}
\Big) =m_1 \Big(
\tfrac{\lg n}{k+1} -\lg m_2
\Big) \,.
\end{equation}
Therefore, \ppsalgabbr{} could reach an approximate solution with probability at least $(1-c\cdot 10^{-\lambda})$. In practice, we choose positive integers $m_2$ and $m_1$ such that $\lambda$ is reasonably large, ensuring that the algorithm will reach an approximate solution with high probability.

\section{Empirical Results}
In this section, we elaborate on our experiments to evaluate the effectiveness of the proposed \ppsdisabbr{} in Eq.~\eqref{eq:8} and \ppsuperabbr{} in Algorithm~\ref{alg:extend}, as well as \ppsalgabbr{} in Algorithm~\ref{alg:approx}. 
These experiments are conducted to explore the following research questions: 
\textbf{RQ1}. Compared with the state-of-the-art (SOTA) baseline fairness measures, does the proposed \ppsdisabbr{} capture the discriminative degree of one classifier effectively, and can it capture the discrimination level when facing several SAs with multiple values at the same time? 
Moreover, compared with the baselines, can \ppsdisabbr{} capture the discrimination level from both individual and group fairness aspects? 
\textbf{RQ2}. Can \ppsalgabbr{} approximate the direct computation of distances in Eq.~\eqref{eq:4a} and \eqref{eq:4b} precisely, and how efficient is \ppsalgabbr{} compared with the direct computation of distances? 
And by extension, can \ppsuperabbr{} approximate the direct computation of distances in Eq.~\eqref{eq:6a} and \eqref{eq:6b} precisely, and how efficient is \ppsuperabbr{} compared with the direct computation of distances? 
\textbf{RQ3}. Will the choice of hyperparameters (that is, $m_1$ and $m_2$ in \ppsalgabbr{} and \ppsuperabbr) affect the approximation results, and if the answer is yes, how? 
Furthermore, we also discuss the possibility of applying \ppsdisabbr{} to represented features in neural networks and the limitations of the proposed approximation methods at the end of this section.

\subsection{Experimental setup}
In this subsection, we present the experimental settings we use, including datasets, evaluation metrics, baseline fairness measures, and implementation details.

\paragraph*{\bf Datasets}
Five public datasets were adopted in the experiments; 
Each of them has two SAs except Ricci, with more details provided in Table~\ref{tab:stats} below.

\vspace{-3mm}
\begin{table}[tbh]%
\caption{Data statistics. The column `\#inst' represents the number of instances; 
`\#feat' represents the number of features (incl. one or two SAs) excluding classification labels, and `prep' is the number of features after preprocessing. 
For each SA, `\#val' means the number of its values; 
`\#in-priv' is the number of members in the privileged group accordingly. 
}\label{tab:stats}
\centering\vspace{-3mm}
\scalebox{.76}{%
\begin{tabular}{r|r|rr|rrr|rrr}
\toprule
\multirow{2}{*}{\bf Dataset} & \multirow{2}{*}{\bf \#inst} & \multicolumn{2}{c|}{\bf \#feat} & 
\multicolumn{3}{c|}{\bf 1st sensitive attribute} & \multicolumn{3}{c}{\bf 2nd sensitive attribute} \\
& & raw & prep & & \#val & \#in-priv & & \#val & \#in-priv \\
\midrule
ricci  \cite{dataset1_2024} &   118 &  5 &   6 & 
race& 3 & 68 & --- & --- & --- \\
credit \cite{dataset2_2024} &  1000 & 20 &  58 & 
sex & 2 & 690 & age & 2 & 851 \\
income \cite{dataset3_2024} & 30162 & 13 &  98 & 
race& 5 & 25933 & sex & 2 & 20380 \\
ppr    \cite{dataset4_2024} &  6167 & 10 & 401 & 
sex & 2 & 4994 & race & 6 & 2100 \\
ppvr   \cite{dataset4_2024} &  4010 & 10 & 327 & 
sex & 2 & 3173 & race & 6 & 1452 \\
\bottomrule
\end{tabular}
}
\end{table}

\paragraph*{\bf Evaluation metrics}
As data imbalance usually exists within unfair datasets, we consider several criteria to evaluate the prediction performance from different perspectives, including accuracy, precision, recall (\aka{} sensitivity), $\mathrm{f}_1$ score, and specificity. 
For efficiency metrics, we directly compare the time cost of different methods.

\paragraph*{\bf Baseline fairness measures}
To evaluate the validity of \ppsdisabbr{} in capturing the discriminative degree of classifiers, we compare it with 
three commonly-used group fairness measures (that is, demographic parity (DP) \cite{feldman2015certifying,gajane2017formalizing}, equality of opportunity (EO) \cite{hardt2016equality}, and predictive quality parity (PQP) \cite{chouldechova2017fair,verma2018fairness})%
\footnote{%
Three commonly used group fairness measures of one classifier $f(\cdot)$ are evaluated on only one bi-valued SA as
\begin{subequations}
\scriptsize\topequation
\begin{align}
    \mathrm{DP}(f) &= \lvert
    \mathbb{P}_\mathcal{D}[ f(\bm{x})\!=\!1 | \xpos\!=\!1 ] -
    \mathbb{P}_\mathcal{D}[ f(\bm{x})\!=\!1 | \xpos\!=\!0 ]
    \rvert \,,\\
    \mathrm{EO}(f) &= \lvert
    \mathbb{P}_\mathcal{D}[ f(\bm{x})\!=\!1 | \xpos\!=\!1,\, y\!=\!1 ] 
    - 
    \mathbb{P}_\mathcal{D}[ f(\bm{x})\!=\!1 | \xpos\!=\!0,\, y\!=\!1 ]
    \rvert ,\\ 
    \mathrm{PQP}(f) &= \lvert
    \mathbb{P}_\mathcal{D}[ y\!=\!1 | \xpos\!=\!1,\, f(\bm{x})\!=\!1 ] 
    - 
    \mathbb{P}_\mathcal{D}[ y\!=\!1 | \xpos\!=\!0,\, f(\bm{x})\!=\!1 ]
    \rvert ,
\end{align}%
\label{eq:10}%
\end{subequations}%
respectively, where $\insx= (\xneg,\xpos)$, 
$y$, and $f(\bm{x})$ are respectively the features, the true label, and the prediction of this classifier for one instance. Note that $\xpos=1$ and $0$ respectively mean that the instance $\insx$ belongs to the privileged group and the marginalised groups. 
}, %
one extended fairness measure (called statistical parity (SP), inspired by \cite{agarwal2019fair,jiang2020wasserstein})\footnote{%
The original statistical parity (SP) in \cite{agarwal2019fair,jiang2020wasserstein} is neither a quantitative measure nor applied directly to multiple SAs. It is more like, for at least one (bi- or multi-valued) SA, that is, $\xpos\!=\! [a_1,...,a_{n_a}]^\mathsf{T} (n_a\!\geqslant\! 1)$, 
\begin{subequations}
\scriptsize\topequation
\begin{align}
    \mathrm{SP}^\text{pre,max}(f) &= \textstyle
    \max_{k\in \mathcal{A}^\text{pre}} \{
    |\mathbb{P}_\mathcal{D}[ f(\insx)\!=\!1 \mid \xpos\!=\!k ]
    -\mathbb{P}_\mathcal{D}[ f(\insx)\!=\!1 ]|
    \} \,,\\
    \mathrm{SP}^\text{pre,avg}(f) &=\textstyle
    \sum_{k\in \mathcal{A}^\text{pre}}
    |\mathbb{P}_\mathcal{D}[ f(\insx)\!=\!1 \mid \xpos\!=\!k ]
    -\mathbb{P}_\mathcal{D}[ f(\insx)\!=\!1 ]| \,,
\end{align}%
\end{subequations}%
where $\xpos\in\mathcal{A}^\text{pre}$ and $\mathcal{A}^\text{pre}$ is a finite set with $\prod_{1\leqslant i\leqslant n_a}|n_{a_i}|$ elements. 
It is not exactly the same but consistent with \cite{agarwal2019fair,jiang2020wasserstein}.

For comparison, we made slight modifications following its original idea. 
For one multi-valued SA (that is, $a_i\!\in\! \{1,2,...,n_{a_i}\}$), two versions of the extended SP are 
\begin{subequations}
\scriptsize\topequation
\begin{align}
    \mathrm{SP}_i^\text{max}(f) &= \textstyle
    \max_{1\leqslant j\leqslant n_{a_i}}\{
        |\mathbb{P}_\mathcal{D}[ f(\insx)\!=\!1 \!\mid\! a_i\!=\!j ] 
        -\mathbb{P}_\mathcal{D}[ f(\insx)\!=\!1 ]|
    \} \,,\\
    \mathrm{SP}_i^\text{avg}(f) &= \textstyle
    \sum_{1\leqslant j\leqslant n_{a_i}}
        |\mathbb{P}_\mathcal{D}[ f(\insx)\!=\!1 \!\mid\! a_i\!=\!j ] 
        -\mathbb{P}_\mathcal{D}[ f(\insx)\!=\!1 ]|
    \,,
\end{align}%
\label{eq:fair,sp}%
\end{subequations}%
respectively. 
For several multi-valued SAs (that is, $\xpos\!=\! [a_1,...,a_{n_a}]^\mathsf{T}$), 
two versions of the extended SP are 
\begin{subequations}
\scriptsize\topequation
\begin{align}
    \mathrm{SP}^\text{max}(f) &= \textstyle
    \max_{1\leqslant i\leqslant n_a}
    \mathrm{SP}_i^\text{max}(f) \,,\\
    \mathrm{SP}^\text{avg}(f) &= \textstyle
    \frac{1}{n_a} \sum_{1\leqslant i\leqslant n_a}
    \mathrm{SP}_i^\text{avg}(f) \,,
\end{align}%
\label{eq:fair,dp}%
\end{subequations}%
respectively. 
Neither \eqref{eq:fair,sp} nor \eqref{eq:fair,dp} is exactly the same as the original SP.}, 
and one named discriminative risk (DR)\footnote{%
The discriminative risk (DR) of this classifier is evaluated as
\begin{equation}
\scriptsize\topequation
    \mathrm{DR}(f) = \mathbb{E}_\mathcal{D}[ 
        \mathbb{I}(f(\xneg,\xpos)\neq f(\xneg,\xqtb)) 
    ] \,,\label{eq:11}
\end{equation}%
where $\xqtb$ represents the perturbed SAs. DR reflects its bias level from both individual- and group-fairness aspects. 
} \cite{bian2023increasing_re} that could reflect the bias level of ML models from both individual- and group-fairness aspects. 
Note that only the extended SP and DR can work for several multi-valued SAs.

\begin{table*}
\begin{minipage}{\textwidth}
\centering
\caption{Test evaluation performance of different fairness measures, where LightGBM is used as the learning algorithm. The column named `$\text{Att}_\text{sen}$' denotes a corresponding SA, and the notation $\Delta$(performance) denotes the performance difference between a metric and that after perturbing the data \cite{bian2023increasing_re}. Note that $\newfist_\text{prev}=\sfrac{\extBistAlt{f}}{\extBistAlt{}}-1$ represents our previous work \cite{bian2024does}, and 
$\newfist=\log(\sfrac{ \newDist_{f,\xpos}(S,a_i) }{ \newDist_{\xpos}(S,a_i) })$ and $\newfist^\text{avg}=\log(\sfrac{ \newDist_{f,\xpos}^\text{avg}(S,a_i) }{ \newDist_{\xpos}^\text{avg}(S,a_i) })$ here represent \ppsdisabbr{} in this paper for each SA. 
}\label{tab:sen_att,sing}
\vspace{-4mm}%
\scalebox{.74}{%
\renewcommand\tabcolsep{2pt}
\begin{tabular}{rr| rrrr| rrrr r|rrr}
\toprule
\multirow{2}{*}{\bf Dataset} & \multirow{2}{*}{\bf $\text{Att}_\text{sen}$} & \multicolumn{4}{c|}{\bf Normal evaluation metric} & \multicolumn{5}{c|}{\bf Baseline fairness measure} & \multicolumn{3}{c}{\bf Proposed fairness measure} \\
\cline{3-14} 
& & \multicolumn{1}{c}{Accuracy} & \multicolumn{1}{c}{$\mathrm{f}_1$ score} & \multicolumn{1}{c}{$\Delta$Accuracy} & \multicolumn{1}{c|}{$\Delta\mathrm{f}_1$ score}
& \multicolumn{1}{c}{DP} & \multicolumn{1}{c}{EO} & \multicolumn{1}{c}{PQP} & \multicolumn{1}{c}{DR} & \multicolumn{1}{c|}{$\newfist_\text{prev}$ bin-val} 
& \multicolumn{1}{c}{$\newfist_\text{prev}$ multival} & \multicolumn{1}{c}{$\newfist$} & \multicolumn{1}{c}{$\newfist^\text{avg}$} \\
\midrule
ricci  & race &
99.5789\topelement{$\pm$0.5766} &	99.5604\topelement{$\pm$0.6019} &	52.2105\topelement{$\pm$0.5766} &	35.2747\topelement{$\pm$0.6019} &	0.3112\topelement{$\pm$0.0424} &	0.0000\topelement{$\pm$0.0000} &	0.0121\topelement{$\pm$0.0166} &	0.5221\topelement{$\pm$0.0058} &	0.0000\topelement{$\pm$0.0000} &	0.0000\topelement{$\pm$0.0000} &	0.0000\topelement{$\pm$0.0000} &	0.0016\topelement{$\pm$0.0022} \\
credit & sex  &
77.8750\topelement{$\pm$1.1726} &	86.2892\topelement{$\pm$0.6221} &	10.2750\topelement{$\pm$3.9906} &	11.7147\topelement{$\pm$4.7568} &	0.0189\topelement{$\pm$0.0095} &	0.0016\topelement{$\pm$0.0006} &	0.0666\topelement{$\pm$0.0189} &	0.3438\topelement{$\pm$0.1001} &	-0.0059\topelement{$\pm$0.0181} &	-0.0059\topelement{$\pm$0.0181} &	-0.0026\topelement{$\pm$0.0079} &	-0.0075\topelement{$\pm$0.0005} \\
       & age  &
77.8750\topelement{$\pm$1.1726} &	86.2892\topelement{$\pm$0.6221} &	10.2750\topelement{$\pm$3.9906} &	11.7147\topelement{$\pm$4.7568} &	0.0335\topelement{$\pm$0.0137} &	0.0065\topelement{$\pm$0.0037} &	0.1107\topelement{$\pm$0.0209} &	0.3438\topelement{$\pm$0.1001} &	-0.0047\topelement{$\pm$0.0105} &	-0.0047\topelement{$\pm$0.0105} &	-0.0021\topelement{$\pm$0.0046} &	-0.0073\topelement{$\pm$0.0008} \\
income & race &
83.3998\topelement{$\pm$0.2568} &	51.6536\topelement{$\pm$1.4002} &	3.8515\topelement{$\pm$3.6332} &	6.6956\topelement{$\pm$3.6031} &	0.0395\topelement{$\pm$0.0013} &	0.0126\topelement{$\pm$0.0050} &	0.0110\topelement{$\pm$0.0069} &	0.1542\topelement{$\pm$0.1015} &	-0.0414\topelement{$\pm$0.0218} &	-0.0414\topelement{$\pm$0.0218} &	-0.0185\topelement{$\pm$0.0099} &	-0.0170\topelement{$\pm$0.0012} \\
       & sex  &
83.3998\topelement{$\pm$0.2568} &	51.6536\topelement{$\pm$1.4002} &	3.8515\topelement{$\pm$3.6332} &	6.6956\topelement{$\pm$3.6031} &	0.0886\topelement{$\pm$0.0033} &	0.0793\topelement{$\pm$0.0089} &	0.0106\topelement{$\pm$0.0063} &	0.1542\topelement{$\pm$0.1015} &	-0.0075\topelement{$\pm$0.0160} &	-0.0075\topelement{$\pm$0.0160} &	-0.0033\topelement{$\pm$0.0071} &	-0.0073\topelement{$\pm$0.0007} \\
ppr    & sex  &
70.0507\topelement{$\pm$0.4676} &	62.9810\topelement{$\pm$1.4929} &	10.0709\topelement{$\pm$0.3289} &	1.4437\topelement{$\pm$0.9277} &	0.1861\topelement{$\pm$0.0207} &	0.1800\topelement{$\pm$0.0357} &	0.0169\topelement{$\pm$0.0082} &	0.3598\topelement{$\pm$0.0100} &	-0.0040\topelement{$\pm$0.0078} &	-0.0040\topelement{$\pm$0.0078} &	-0.0017\topelement{$\pm$0.0034} &	0.0051\topelement{$\pm$0.0103} \\
       & race &
70.0507\topelement{$\pm$0.4676} &	62.9810\topelement{$\pm$1.4929} &	10.0709\topelement{$\pm$0.3289} &	1.4437\topelement{$\pm$0.9277} &	0.1891\topelement{$\pm$0.0272} &	0.2192\topelement{$\pm$0.0297} &	0.0377\topelement{$\pm$0.0143} &	0.3598\topelement{$\pm$0.0100} &	-0.0134\topelement{$\pm$0.0134} &	-0.0114\topelement{$\pm$0.0104} &	-0.0050\topelement{$\pm$0.0046} &	-0.0154\topelement{$\pm$0.0139} \\
ppvr   & sex  &
83.8953\topelement{$\pm$0.2315} &	1.9415\topelement{$\pm$2.7688} &	0.1620\topelement{$\pm$0.2315} &	1.9415\topelement{$\pm$2.7688} &	0.0020\topelement{$\pm$0.0029} &	0.0113\topelement{$\pm$0.0162} &	0.4000\topelement{$\pm$0.5477} &	0.0016\topelement{$\pm$0.0023} &	-0.0107\topelement{$\pm$0.0625} &	-0.0107\topelement{$\pm$0.0625} &	-0.0054\topelement{$\pm$0.0268} &	-0.0560\topelement{$\pm$0.0042} \\
       & race &
83.8953\topelement{$\pm$0.2315} &	1.9415\topelement{$\pm$2.7688} &	0.1620\topelement{$\pm$0.2315} &	1.9415\topelement{$\pm$2.7688} &	0.0008\topelement{$\pm$0.0011} &	0.0048\topelement{$\pm$0.0093} &	0.0000\topelement{$\pm$0.0000} &	0.0016\topelement{$\pm$0.0023} &	-0.0150\topelement{$\pm$0.0930} &	-0.0027\topelement{$\pm$0.0253} &	-0.0013\topelement{$\pm$0.0109} &	-0.0785\topelement{$\pm$0.0036} \\
\bottomrule
\end{tabular}
}
\end{minipage}
\vspace{-3mm}
\begin{minipage}{\textwidth}
\centering
\caption{Test evaluation performance of different fairness measures, where LightGBM is used as the learning algorithm. The notation $\Delta$ denotes the performance difference between a metric and that after perturbing the data. Here $\text{DR}_i$ works for one SA with multiple values \cite{bian2023increasing_re}, and we use $\text{DR}_\text{avg}= \frac{1}{n_a}\sum_{i=1}^{n_a} \text{DR}_i$ to reflect the bias level on the whole dataset. 
}\label{tab:sen_att,pl}
\vspace{-4mm}%
\scalebox{.72}{%
\renewcommand\tabcolsep{1.5pt}
\begin{tabular}{r| rrrr| rrr| rrr| rrr}
\toprule
\multirow{2}{*}{\bf Dataset} & \multicolumn{4}{c|}{\bf Normal evaluation metric} & \multicolumn{3}{c|}{\bf Fairness for first sensitive attribute} & \multicolumn{3}{c|}{\bf Fairness for second sensitive attribute} & \multicolumn{3}{c}{\bf Fairness for all sensitive attributes} \\
\cline{2-14} 
& \multicolumn{1}{c}{Accuracy} & \multicolumn{1}{c}{$\mathrm{f}_1$ score} & \multicolumn{1}{c}{$\Delta$Accuracy} & \multicolumn{1}{c|}{$\Delta\mathrm{f}_1$ score}
& \multicolumn{1}{c}{DR$_1$} & \multicolumn{1}{c}{$\newfist_1$} & \multicolumn{1}{c|}{$\newfist_1^\text{avg}$}
& \multicolumn{1}{c}{DR$_2$} & \multicolumn{1}{c}{$\newfist_2$} & \multicolumn{1}{c|}{$\newfist_2^\text{avg}$}
& \multicolumn{1}{c}{DR$_\text{avg}$} & \multicolumn{1}{c}{$\newfist$} & \multicolumn{1}{c}{$\newfist^\text{avg}$} \\
\midrule
ricci  &
97.3913\topelement{$\pm$2.3814} &	97.3085\topelement{$\pm$2.4628} &	49.5652\topelement{$\pm$2.3814} &	32.6026\topelement{$\pm$2.4628} &	0.5130\topelement{$\pm$0.0364} &	0.0000\topelement{$\pm$0.0000} &	-0.0031\topelement{$\pm$0.0271} &	--- &	--- &	--- &	0.5130\topelement{$\pm$0.0364} &	0.0000\topelement{$\pm$0.0000} &	-0.0031\topelement{$\pm$0.0271} \\
credit &
77.8750\topelement{$\pm$1.1726} &	86.2892\topelement{$\pm$0.6221} &	10.2750\topelement{$\pm$3.9906} &	11.7147\topelement{$\pm$4.7568} &	0.3438\topelement{$\pm$0.1001} &	-0.0026\topelement{$\pm$0.0079} &	-0.0075\topelement{$\pm$0.0005} &	0.3438\topelement{$\pm$0.1001} &	-0.0021\topelement{$\pm$0.0046} &	-0.0073\topelement{$\pm$0.0008} &	0.3438\topelement{$\pm$0.1001} &	-0.0021\topelement{$\pm$0.0046} &	-0.0074\topelement{$\pm$0.0005} \\
income &
83.3998\topelement{$\pm$0.2568} &	51.6536\topelement{$\pm$1.4002} &	3.8515\topelement{$\pm$3.6332} &	6.6956\topelement{$\pm$3.6031} &	0.1542\topelement{$\pm$0.1015} &	-0.0185\topelement{$\pm$0.0099} &	-0.0170\topelement{$\pm$0.0012} &	0.1542\topelement{$\pm$0.1015} &	-0.0033\topelement{$\pm$0.0071} &	-0.0073\topelement{$\pm$0.0007} &	0.1542\topelement{$\pm$0.1015} &	-0.0041\topelement{$\pm$0.0068} &	-0.0107\topelement{$\pm$0.0005} \\
ppr    &
70.0507\topelement{$\pm$0.4676} &	62.9810\topelement{$\pm$1.4929} &	10.0709\topelement{$\pm$0.3289} &	1.4437\topelement{$\pm$0.9277} &	0.3598\topelement{$\pm$0.0100} &	-0.0017\topelement{$\pm$0.0034} &	0.0051\topelement{$\pm$0.0103} &	0.3598\topelement{$\pm$0.0100} &	-0.0050\topelement{$\pm$0.0046} &	-0.0154\topelement{$\pm$0.0139} &	0.3598\topelement{$\pm$0.0100} &	-0.0017\topelement{$\pm$0.0034} &	-0.0026\topelement{$\pm$0.0108} \\
ppvr   &
83.8953\topelement{$\pm$0.2315} &	1.9415\topelement{$\pm$2.7688} &	0.1620\topelement{$\pm$0.2315} &	1.9415\topelement{$\pm$2.7688} &	0.0016\topelement{$\pm$0.0023} &	-0.0054\topelement{$\pm$0.0268} &	-0.0560\topelement{$\pm$0.0042} &	0.0016\topelement{$\pm$0.0023} &	-0.0013\topelement{$\pm$0.0109} &	-0.0785\topelement{$\pm$0.0036} &	0.0016\topelement{$\pm$0.0023} &	-0.0054\topelement{$\pm$0.0268} &	-0.0647\topelement{$\pm$0.0034} \\
\bottomrule
\end{tabular}
}
\end{minipage}
\end{table*}
\begin{figure}
\centering %
\subfloat[]{\includegraphics[width=5.4cm]{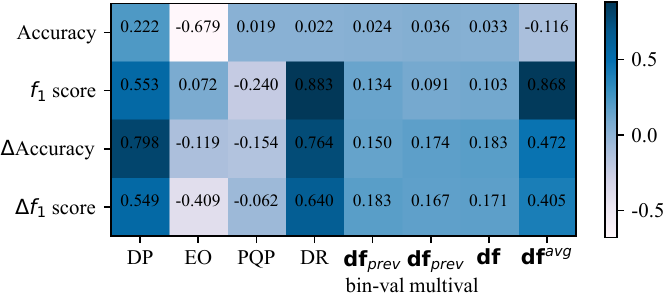}}\\
\vspace{-4mm}%
\subfloat[]{\includegraphics[width=5.4cm]{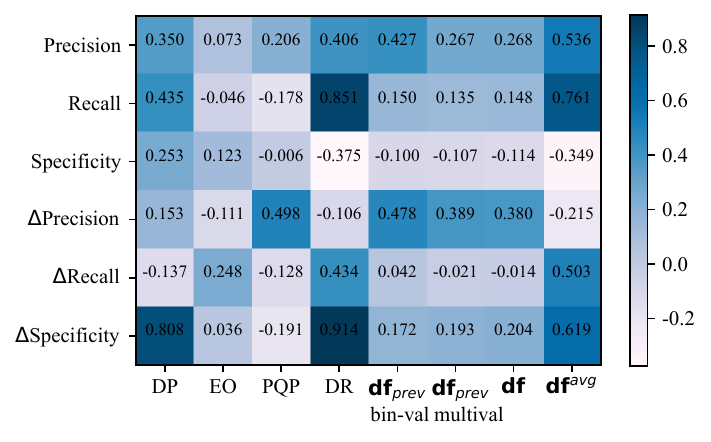}}
\vspace{-3mm}%
\caption{%
Correlation heatmap between normal evaluation metric and fairness, for one single SA. The used notations refer to those in Table~\ref{tab:sen_att,sing}. 
}\label{fig:sen_att,sing}%
\end{figure}
\begin{figure}
\centering %
\subfloat[]{\includegraphics[width=6.1cm]{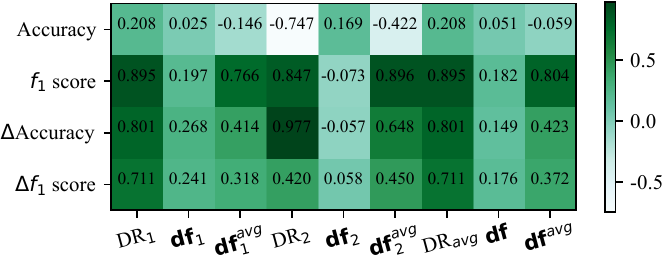}}\\
\vspace{-4mm}
\subfloat[]{\includegraphics[width=6.1cm]{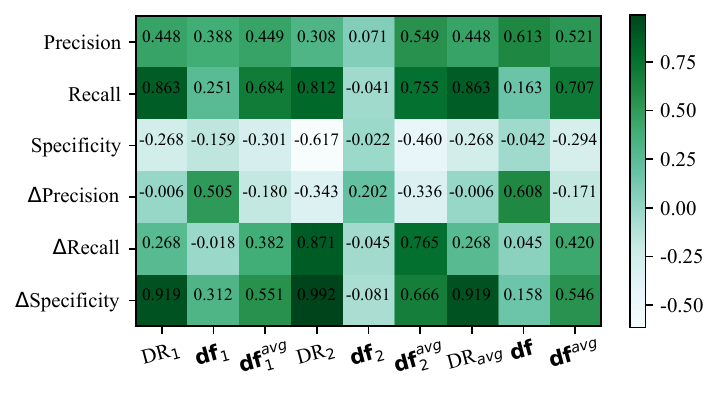}}
\vspace{-3mm}
\caption{%
Correlation heatmap between normal evaluation metric and fairness measure, for all SAs within the dataset. The notations used here refer to those in Table~\ref{tab:sen_att,pl}. 
}\label{fig:sen_att,pl}
\end{figure}

\begin{figure}
\centering
\subfloat[]{\label{subfig:extSP,a}
\includegraphics[height=2.8cm]{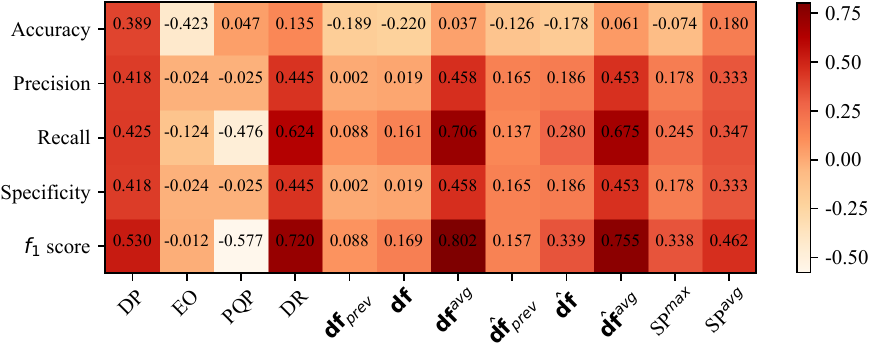}} \\ \vspace{-3mm}
\subfloat[]{\label{subfig:extSP,b}
\includegraphics[height=2.8cm]{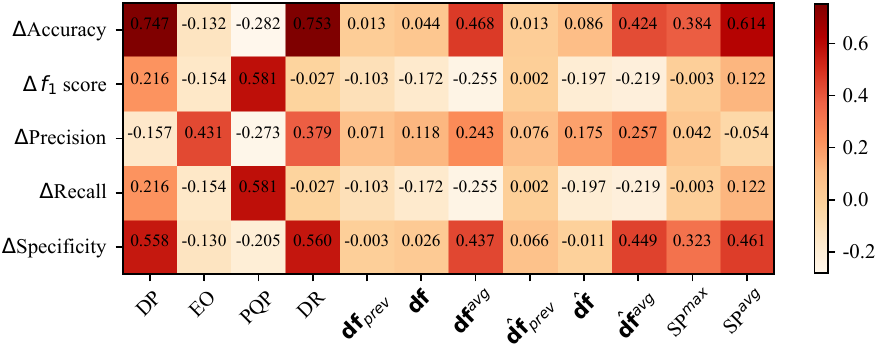}} \\ \vspace{-3mm}
\subfloat[]{\label{subfig:extSP,c}
\includegraphics[height=3.3cm]{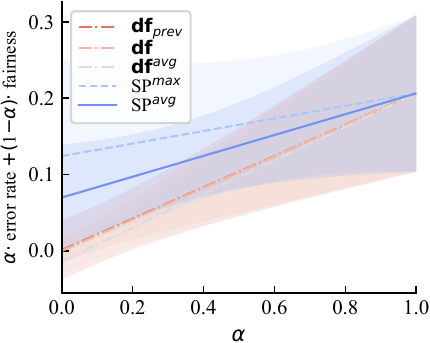}}
\subfloat[]{\label{subfig:extSP,d}
\includegraphics[height=3.3cm]{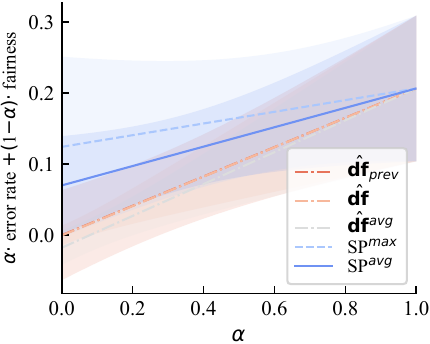}} \vspace{-3mm}
\caption{Comparison of \ppsdisabbr{} with the extended SP. 
(a--b) Correlation heatmap between normal evaluation metric and fairness measure, for all SAs within the dataset. Note that in the computation of $\newfist_\text{prev}$ here, multi-valued SAs are handled as bi-valued cases, equivalent to $\newfist_\text{prev\;bin-val}$ in Figure~\ref{fig:sen_att,sing}; $\hat{\newfist}_\text{prev}, \hat{\newfist}$, and $\hat{\newfist}^\text{avg}$ indicate that the distances are obtained using approximation algorithms. 
(c--d) Plots of best test-set fairness-accuracy trade-offs per fairness metric \cite{cruz2022fairgbm} (the smaller the better). 
}\label{fig:ext,SP}
\end{figure}

\begin{figure*}
\begin{minipage}{\textwidth}
\centering
\subfloat[]{\label{subfig:tradeoff,a}
\includegraphics[height=3.27cm]{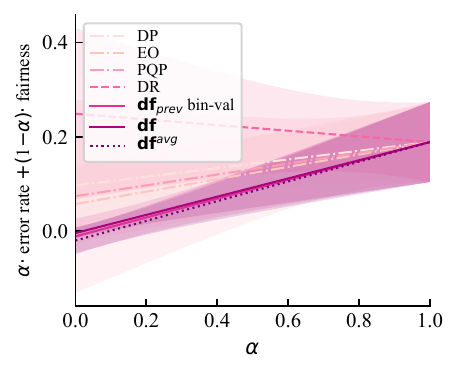}}
\subfloat[]{\label{subfig:tradeoff,b}
\includegraphics[height=3.27cm]{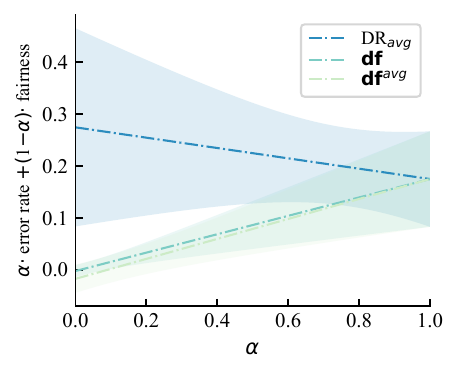}}
\subfloat[]{\label{subfig:tradeoff,c}
\includegraphics[height=3.27cm]{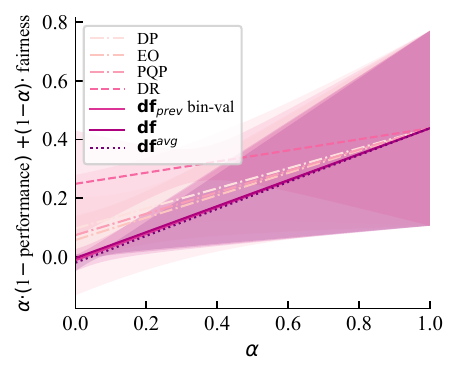}}
\subfloat[]{\label{subfig:tradeoff,d}
\includegraphics[height=3.27cm]{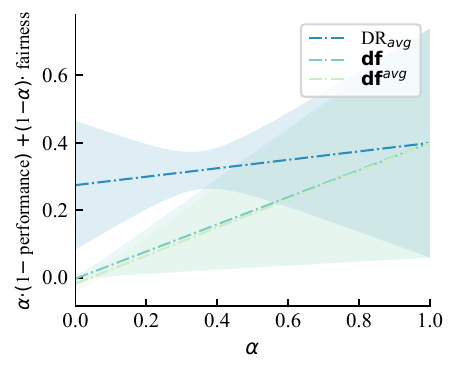}}
\vspace{-3mm}\caption{%
Plots of best test-set fairness-performance trade-offs per fairness metric \cite{cruz2022fairgbm} (the smaller the better). (a) Plot of fairness-accuracy trade-off for one single SA; (b) Plot of fairness-accuracy trade-off for all SAs; 
(c--d) Plots of fairness-$\mathrm{f}_1$ score trade-off for one SA and for all SAs, respectively. 
Note that the notations in (a) and (c) refer to those in Table~\ref{tab:sen_att,sing}, and that in (b) and (d) refer to those in Table~\ref{tab:sen_att,pl}.  
}\label{fig:trade-off}
\end{minipage}
\vspace{-4mm}
\begin{minipage}{\textwidth}
\centering
\subfloat[]{\label{subfig:approx,1}
\includegraphics[width=4.1cm]{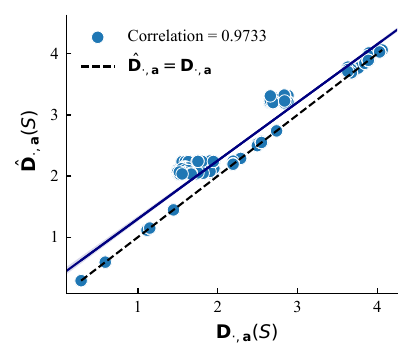}}
\subfloat[]{\label{subfig:approx,2}
\includegraphics[width=4.1cm]{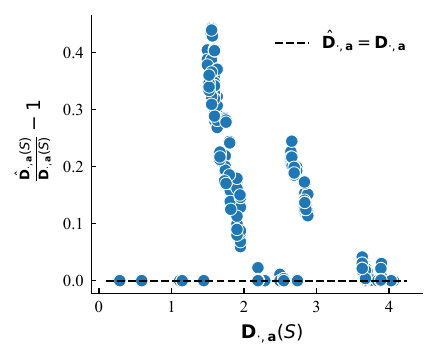}}
\subfloat[]{\label{subfig:approx,3}
\includegraphics[width=4.1cm]{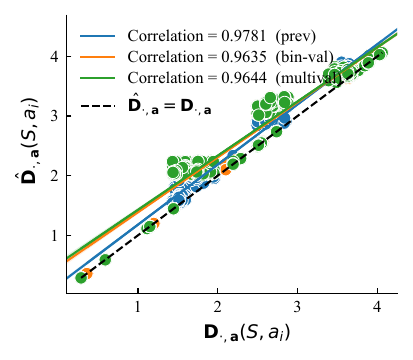}}
\subfloat[]{\label{subfig:approx,4}
\includegraphics[width=4.1cm]{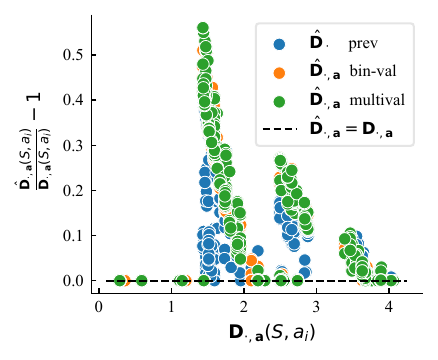}}
\\ \vspace{-4mm}
\subfloat[]{\label{subfig:approx,5}
\includegraphics[width=4.1cm]{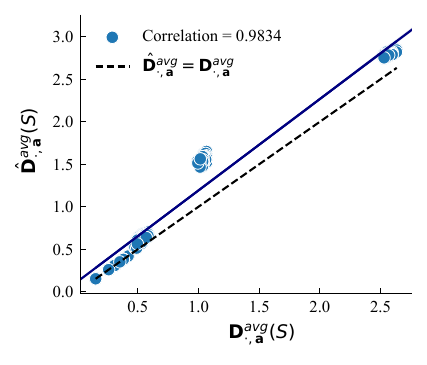}}
\subfloat[]{\label{subfig:approx,6}
\includegraphics[width=4.1cm]{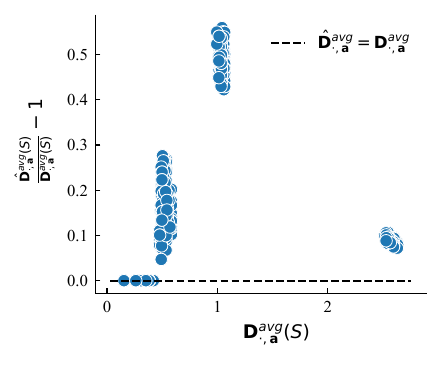}}
\subfloat[]{\label{subfig:approx,7}
\includegraphics[width=4.1cm]{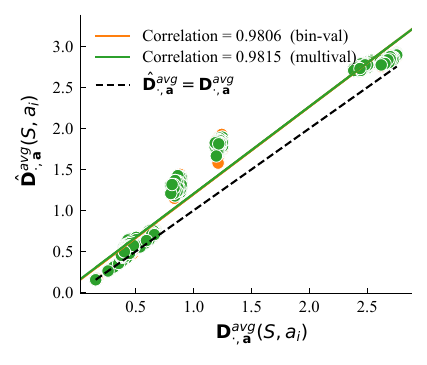}}
\subfloat[]{\label{subfig:approx,8}
\includegraphics[width=4.1cm]{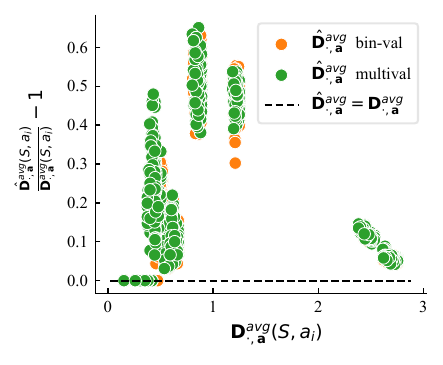}}
\\ \vspace{-4mm}
\subfloat[]{\label{subfig:approx,9}
\includegraphics[width=4.1cm]{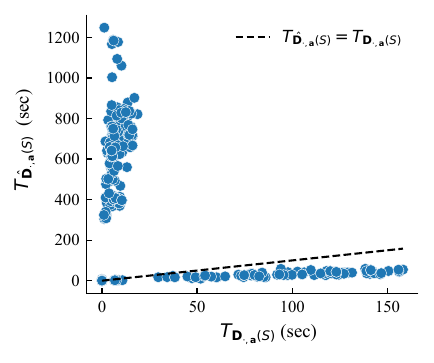}}
\subfloat[]{\label{subfig:approx,10}
\includegraphics[width=4.1cm]{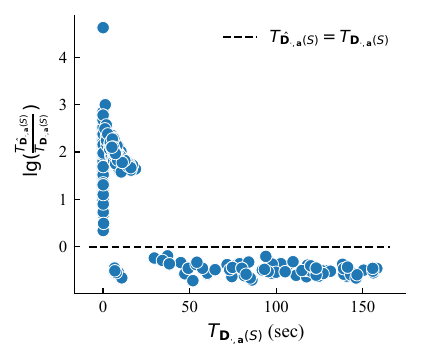}}
\subfloat[]{\label{subfig:approx,11}
\includegraphics[width=4.1cm]{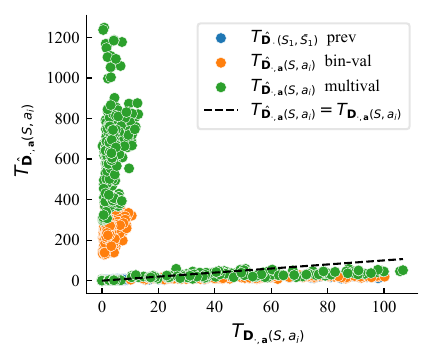}}
\subfloat[]{\label{subfig:approx,12}
\includegraphics[width=4.1cm]{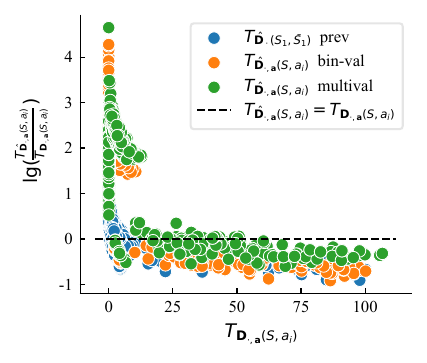}}
\vspace{-3mm}
\caption{Comparison of approximation distances between sets with precise distances that are calculated directly by definition, evaluated on test data; Note that `prev' denotes the approximation results obtained by our previous work \cite{bian2024does}. 
(a--b), (c--d), (e--f), and (g--h) Scatter plots for comparison between approximated and precise values of $\extEist{}$, $\extDist{}$, $\extEist{avg}$, and $\extDist{avg}$, respectively; 
(i--j) Time cost comparison between \ppsuperabbr{} and direct computation via Eq.~\eqref{eq:6a} and \eqref{eq:6b}; (k--l) Time cost comparison between \ppsalgabbr{} and direct computation via Eq.~\eqref{eq:4a} and \eqref{eq:4b}. 
}\label{fig:approx}
\end{minipage}
\end{figure*}

\begin{figure*}
\begin{minipage}{\textwidth}
\centering
\subfloat[]{\label{subfig:pm,ext,1}
\includegraphics[height=3.4cm]{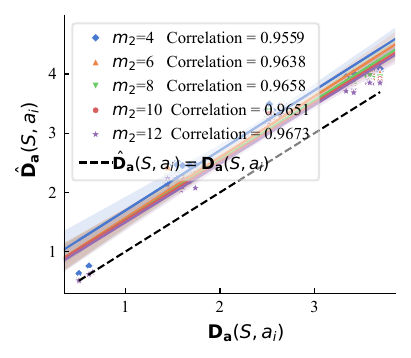}}
\subfloat[]{\label{subfig:pm,ext,2}
\includegraphics[height=3.4cm]{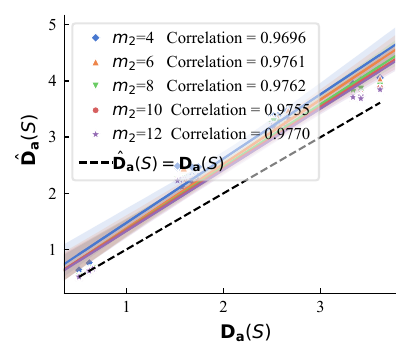}}
\subfloat[]{\label{subfig:pm,ext,3}
\includegraphics[height=3.4cm]{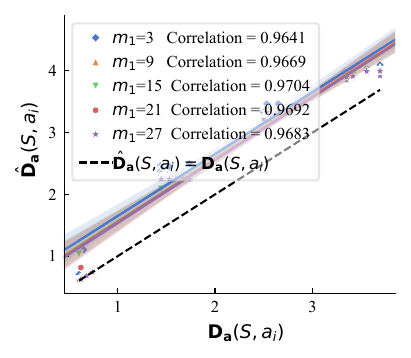}}
\subfloat[]{\label{subfig:pm,ext,4}
\includegraphics[height=3.4cm]{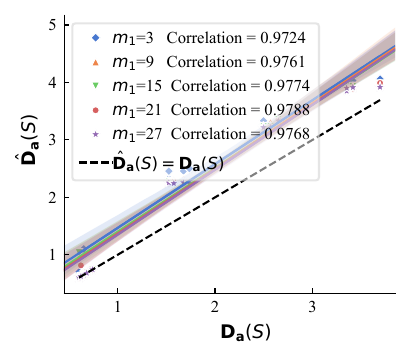}}
\\ \vspace{-4mm}
\subfloat[]{\label{subfig:pm,ext,5}
\includegraphics[height=3.4cm]{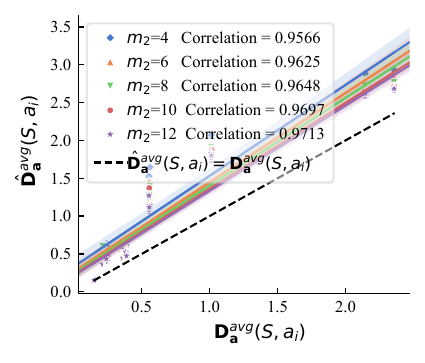}}
\subfloat[]{\label{subfig:pm,ext,6}
\includegraphics[height=3.4cm]{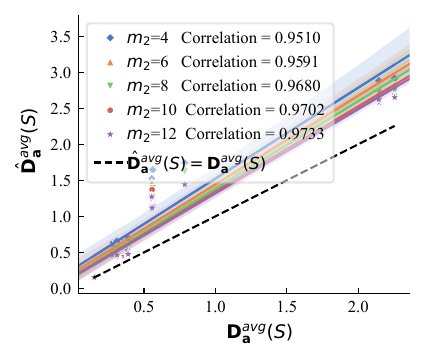}}
\subfloat[]{\label{subfig:pm,ext,7}
\includegraphics[height=3.4cm]{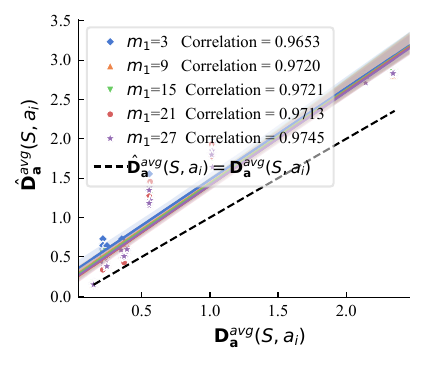}}
\subfloat[]{\label{subfig:pm,ext,8}
\includegraphics[height=3.4cm]{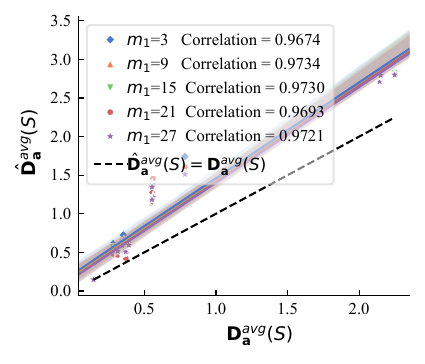}}
\\ \vspace{-4mm}
\subfloat[]{\label{subfig:pm,ext,9}
\includegraphics[height=3.4cm]{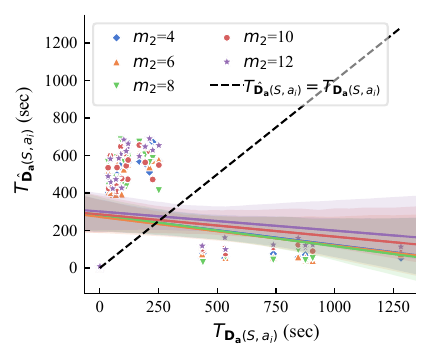}}
\subfloat[]{\label{subfig:pm,ext,10}
\includegraphics[height=3.4cm]{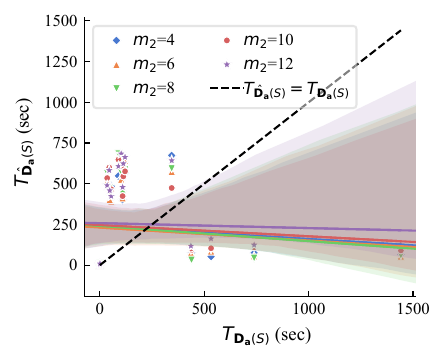}}
\subfloat[]{\label{subfig:pm,ext,11}
\includegraphics[height=3.4cm]{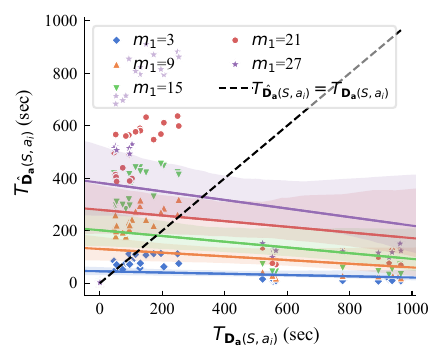}}
\subfloat[]{\label{subfig:pm,ext,12}
\includegraphics[height=3.4cm]{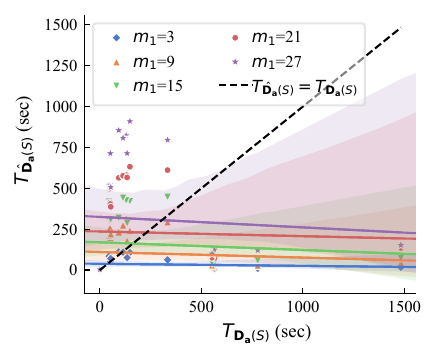}}
\vspace{-3mm}
\caption{Effect of hyperparameters $m_1$ and $m_2$ in \ppsalgabbr{} and \ppsuperabbr{}. 
(a--b) The effect of $m_2$ on the maximal distance value; (c--d) The effect of $m_1$ on the maximal distance values. 
(e--f) The effect of $m_2$ on the average distance value; (g--h) The effect of $m_1$ on the average distance value. 
(i--j) The effect of $m_2$ on the time cost; (k--l) The effect of $m_1$ on the time cost, where $m_2$ is set to $\lceil 2\lg(n) \rceil$ in terms of $n$---the size of the corresponding dataset. 
}\label{fig:hyper,ext}
\end{minipage}
\begin{minipage}{\textwidth}
\centering
\subfloat[]{\label{subfig:pm,bin,1}
\includegraphics[height=3.4cm]{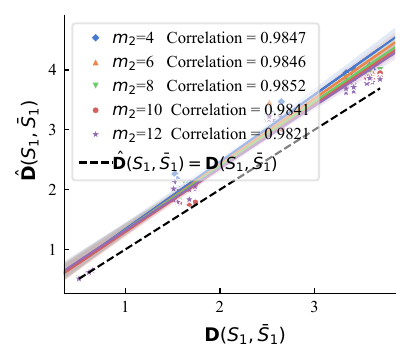}}
\subfloat[]{\label{subfig:pm,bin,2}
\includegraphics[height=3.4cm]{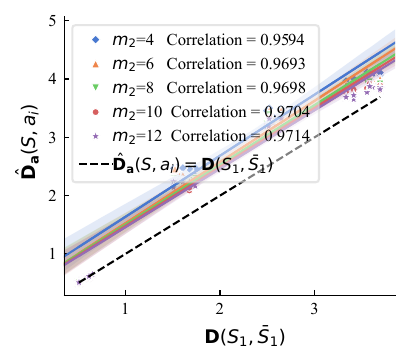}}
\subfloat[]{\label{subfig:pm,bin,3}
\includegraphics[height=3.4cm]{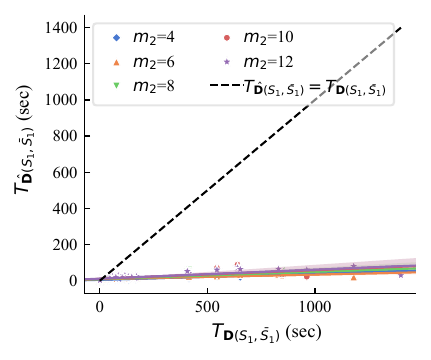}}
\subfloat[]{\label{subfig:pm,bin,4}
\includegraphics[height=3.4cm]{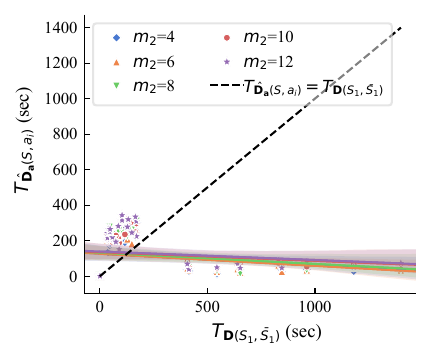}}
\\ \vspace{-4mm}
\subfloat[]{\label{subfig:pm,bin,5}
\includegraphics[height=3.4cm]{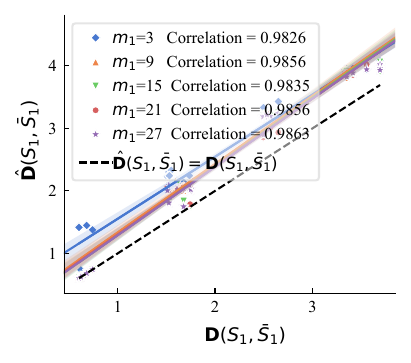}}
\subfloat[]{\label{subfig:pm,bin,6}
\includegraphics[height=3.4cm]{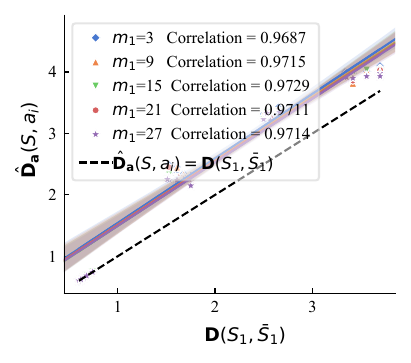}}
\subfloat[]{\label{subfig:pm,bin,7}
\includegraphics[height=3.4cm]{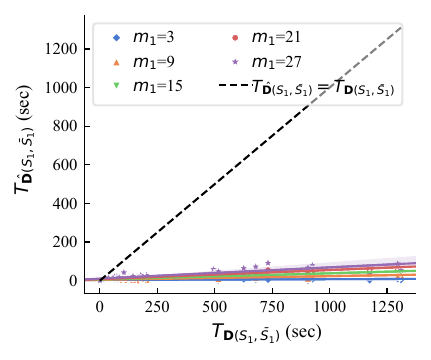}}
\subfloat[]{\label{subfig:pm,bin,8}
\includegraphics[height=3.4cm]{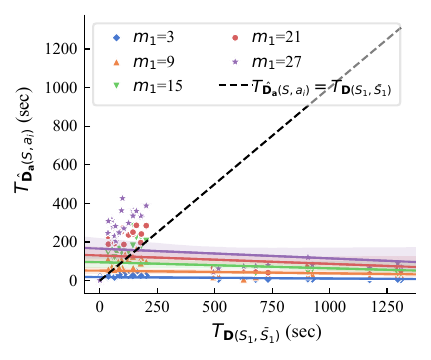}}
\vspace{-3mm}
\caption{Effect of hyperparameters $m_1$ and $m_2$ in our previous work \cite{bian2024does} and \ppsalgabbr{} here, where only binary values are considered for SAs. 
(a--b) The effect of $m_2$ on the (maximal) distance value; (c--d) The effect of $m_2$ on the time cost. 
(e--f) The effect of $m_1$ on the (maximal) distance value; (g--h) The effect of $m_1$ on the time cost, where $m_2$ is fixed to $\lceil 2\lg(n) \rceil$ in terms of $n$---the size of the corresponding dataset. 
}\label{fig:hyper,bin}
\end{minipage}
\end{figure*}

\paragraph*{\bf Implementation details}
We mainly use bagging, AdaBoost, LightGBM \cite{ke2017lightgbm}, FairGBM \cite{cruz2022fairgbm}, and AdaFair \cite{iosifidis2019adafair} as learning algorithms, where FairGBM and AdaFair are two fairness-aware ensemble-based methods. 
Plus, certain kinds of classifiers are used in Section~\ref{subsec:RQ1}---including decision trees (DT), naive Bayesian (NB) classifiers, $k$-nearest neighbours (KNN) classifiers, Logistic Regression (LR), support vector machines (SVM), linear SVMs (linSVM), and multilayer perceptrons (MLP)---so that we have a larger learner pool to choose from based on different fairness-relevant rules. 
Standard 5-fold cross-validation is used in these experiments; in other words, in each iteration, the entire dataset is divided into two parts, with 80\% as the training set and 20\% as the test set. 
Also, features of datasets are scaled in preprocessing to lie between 0 and 1. 
Except for the experiments for RQ3, we set the hyperparameters $m_1=25$ and $m_2= \lceil 2\lg(n)\rceil$ in all other experiments. 
The value $25$ of $m_1$ is randomly chosen, just as a value that is not too small to get an awful approximation meanwhile not too large to cost unnecessarily much more time. We also discuss, in RQ3 and Section~\ref{subsec:RQ3}, whether the choice of $m_1$ and $m_2$ will affect the approximation results.

\subsection{Comparison between \ppsdisabbr{} and baseline fairness measures}
\label{subsec:RQ1}
The aim of this experiment is to evaluate 
the effectiveness of the proposed \ppsdisabbr{} compared with baseline fairness measures. 
As ground-truth discriminative levels of classifiers remain unknown and it is hard to directly compare different methods from that perspective, we compare the correlation (referring to the Pearson correlation coefficient) between the performance difference and different fairness measures. 
The empirical results are reported in Fig.~\ref{fig:sen_att,sing}--\ref{fig:trade-off} and Tables~\ref{tab:sen_att,sing}--\ref{tab:sen_att,pl}.

\subsubsection{Comparison with three group fairness measures}
For one single SA, we can see from Fig.~\ref{fig:sen_att,sing} that: even $\newfist^\text{avg}$ only describes the extra bias, its correlation with $\Delta($performance$)$ is still close to that of DR (and sometimes DP), which means \ppsdisabbr{} can capture the bias within classifiers indeed and that \ppsdisabbr{} captures it more finely than our previous work \cite{bian2024does}. 
Moreover, $\newfist^\text{avg}$ shows higher correlation with $\Delta($performance$)$ than $\newfist$ in most cases, which means $\newfist^\text{avg}$ may capture the extra bias level of classifiers better than $\newfist$ in practice. 

As for multiple SAs, we can see from Fig.~\ref{fig:sen_att,pl} that $\newfist^\text{avg}$ shows higher correlation with $\Delta($performance$)$ than $\newfist$ in most cases, which is similar to our observation in Fig.~\ref{fig:sen_att,sing}. 
Note that the original DR \cite{bian2023increasing_re} calculates all SAs with binary or multiple values as a whole, 
and for comparison with \ppsdisabbr{}, we calculate here $\text{DR}_i$ for each SA and $\text{DR}_\text{avg}= \frac{1}{n_a}\sum_{i=1}^{n_a} \text{DR}_i$, analogously to $\newfist^\text{avg}$. 
Besides, we observe that the correlation between $\newfist^\text{avg}$ and $\Delta\text{Accuracy}$ (resp. $\Delta\mathrm{f}_1$ score, $\Delta\text{Specificity}$) achieves half of that of DR, and $\newfist^\text{avg}$ even outperforms DR concerning $\Delta\text{Recall}$. 
Given that \ppsdisabbr{} only captures the extra bias introduced by classifiers, we believe that at least $\newfist^\text{avg}$ could capture quite a part of the bias within.

Furthermore, we report plots of fairness-performance trade-offs per fairness measure in Fig.~\ref{fig:trade-off}. We can see that: 
1) for one single SA, \ppsdisabbr{} (\ie{} $\newfist$ and $\newfist^\text{avg}$) achieves the best result in Fig.~\ref{fig:trade-off}\subref{subfig:tradeoff,a} and \ref{fig:trade-off}\subref{subfig:tradeoff,c}; and 
2) for all SAs on one dataset, $\newfist$ and $\newfist^\text{avg}$ perform closely and both outperform $\text{DR}_\text{avg}$ in Fig.~\ref{fig:trade-off}\subref{subfig:tradeoff,b} and \ref{fig:trade-off}\subref{subfig:tradeoff,d}. 
This observation demonstrates the effectiveness of \ppsdisabbr{} from another perspective, in other words, \ppsdisabbr{} could work well if fairness-performance trade-offs need to be considered. 

\subsubsection{Comparison with the extended SP}

In Fig.~\ref{fig:ext,SP}\subref{subfig:extSP,b}, $\newfist^\text{avg}$ and $\hat{\newfist}^\text{avg}$ achieve higher correlation with $\Delta$Accuracy, $\Delta$Precision, and $\Delta$Specificity, compared with the extended $\mathrm{SP}^\text{max}$. 
Although they do not have as high a correlation as the extended $\mathrm{SP}^\text{avg}$, their correlations are close to that of the extended $\mathrm{SP}^\text{avg}$, meaning that \ppsdisabbr{} can capture the discrimination at least partially, consistent with the extra discrimination that it is supposed to capture. 
Besides, Fig.~\ref{fig:ext,SP}\subref{subfig:extSP,c} and \ref{fig:ext,SP}\subref{subfig:extSP,d} show that \ppsdisabbr{} (\ie{} both $\newfist$ and $\newfist^\text{avg}$, as well as their approximated values) achieves better fairness-accuracy trade-offs than the extended SP, which is beneficial.\looseness=-1

\subsection{Validity of approximation algorithms for distances between sets in Euclidean spaces}
\label{subsec:RQ2}
In this subsection, we evaluate the performance of the proposed \ppsalgabbr{} and \ppsuperabbr{} compared with the precise distance that is directly calculated by definitions. 
To verify whether they could achieve the true distance between sets in a precise and timely manner, we employ scatter plots to compare their values and time cost, presented in Fig.~\ref{fig:approx}. 
Note that $\extDistAlt{}$ and $\extDistAlt{avg}$ are computed together in \ppsalgabbr{} at one time, and so are $\extEistAlt{}$ and $\extDistAlt{avg}$ in \ppsuperabbr{}. 
Notice that the previous \ppsalgabbr{} \cite{bian2024does} is included for comparison to its current version in scenarios of binary values. 
For comparison, we also use one more baseline named EarlyBreak \cite{taha2015efficient}, which is an efficient algorithm for calculating the exact Hausdorff distance (HD), that is, the distance of sets in this paper.

\subsubsection{Validity of \ppsalgabbr}
As we can see from Figures~\ref{fig:approx}\subref{subfig:approx,3} and \ref{fig:approx}\subref{subfig:approx,4}, the approximated values of maximal distance $\extDistAlt{}$ are highly correlated with their corresponding precise values. Besides, their linear fit line and the identity line (that is, $f(x)=x$) are near and almost parallel, which means the approximated values are pretty close to their precise value. 
Similar observations are concluded for the average distance $\extDistAlt{avg}$ shown in Figures~\ref{fig:approx}\subref{subfig:approx,7} and \ref{fig:approx}\subref{subfig:approx,8}. 
As for the execution time of approximation and direct computation in Figures~\ref{fig:approx}\subref{subfig:approx,11} and \ref{fig:approx}\subref{subfig:approx,12}, \ppsalgabbr{} may take a bit longer time in scenarios of multi-value cases than that of binary values, while all of them could achieve a shorter time than precise values when the execution of direct computation is costly.

\subsubsection{Validity of \ppsuperabbr}
As we can see from Figures~\ref{fig:approx}\subref{subfig:approx,1} and \ref{fig:approx}\subref{subfig:approx,2}, 
the approximated values of maximal distance $\extEistAlt{}$ are highly correlated with their corresponding precise values. 
Besides, their linear fit line and the identity line are near and almost parallel, which means the approximated values are pretty close to their precise value. 
Similar observations are concluded for the average distance $\extEistAlt{avg}$ shown in Figures~\ref{fig:approx}\subref{subfig:approx,5} and \ref{fig:approx}\subref{subfig:approx,6}. 
As for the execution time of approximation and direct computation in Figures~\ref{fig:approx}\subref{subfig:approx,9} and \ref{fig:approx}\subref{subfig:approx,10}, \ppsuperabbr{} would obtain a bigger advantage when computing precise values is expensive, while on the opposite, we do not need \ppsuperabbr{} that much and can directly calculate them instead.

\subsubsection{Comparison with EarlyBreak \cite{taha2015efficient}}

We use EarlyBreak \cite{taha2015efficient} to compare with \ppsalgabbr{}, in order to evaluate whether \ppsalgabbr{} can efficiently approximate the exact values of distances. 
The reason why we do not compare it with \ppsuperabbr{} is that EarlyBreak and the original exact Hausdorff distance are only for single-SA cases. 
The empirical results are provided in Fig.~\ref{fig:efficienthd}. 
We can observe that: (1) Both EarlyBreak and \ppsalgabbr{} can achieve close computational or approximated values to the exact values of distances, with shorter time; (2) \ppsalgabbr{} usually requires less time cost than EarlyBreak, shown in Fig.~\ref{fig:efficienthd}\subref{subfig:efficient,b}. 
Thus, \ppsalgabbr{} can at least serve as an acceptable solution even though $\mathcal{O}(n\log n)$ may still pose an absolute computational cost challenge as $n$ increases.

\begin{figure*}
\centering
\subfloat[]{\includegraphics[height=3.3cm]{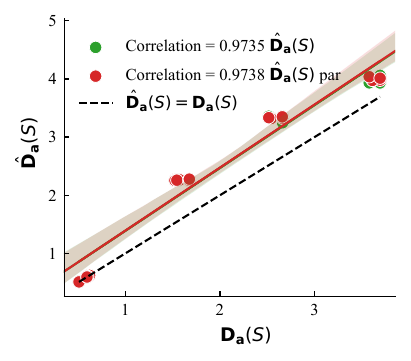}}
\subfloat[]{\includegraphics[height=3.3cm]{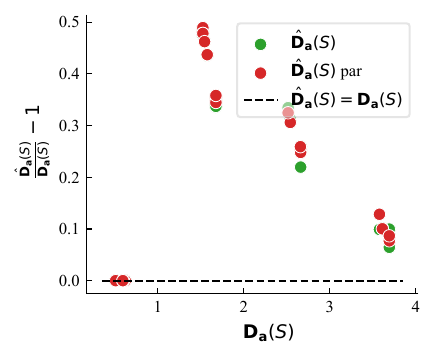}}
\subfloat[]{\includegraphics[height=3.3cm]{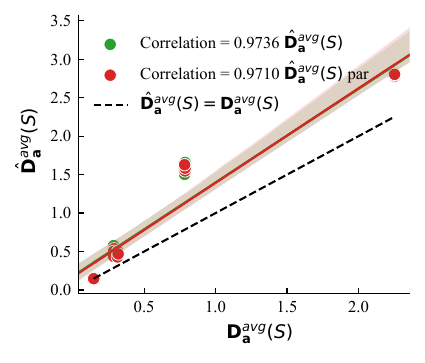}}
\subfloat[]{\includegraphics[height=3.3cm]{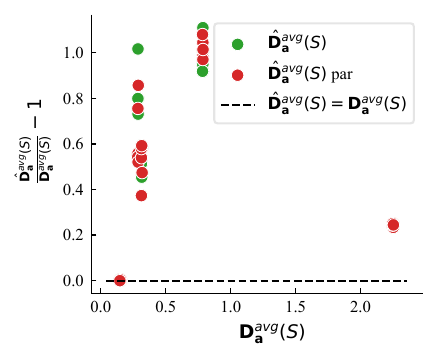}}
\\ \vspace{-4mm}
\subfloat[]{\includegraphics[height=3.3cm]{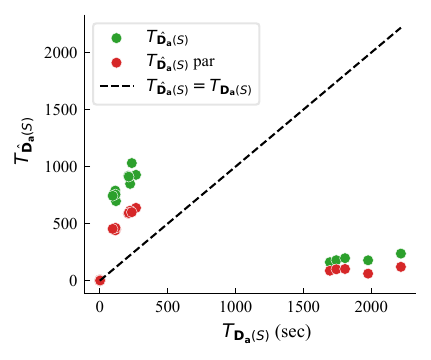}}
\subfloat[]{\includegraphics[height=3.3cm]{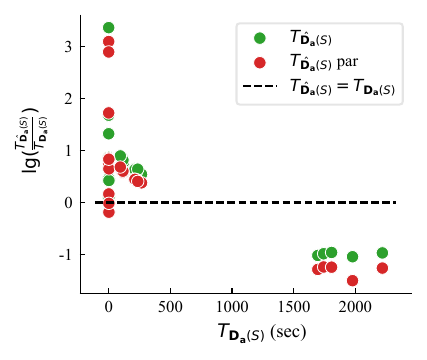}}
\subfloat[\label{subfig:par,sp}]{
\includegraphics[height=3.3cm]{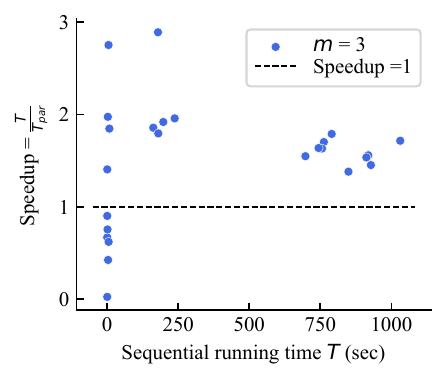}}
\subfloat[\label{subfig:par,ep}]{
\includegraphics[height=3.3cm]{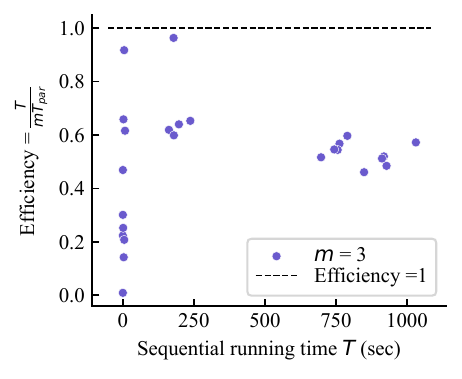}}
\vspace{-3mm}%
\caption{Comparison of approximation distances between sets with or without integrating parallel computing, where `par' represents results integrating parallel computing and three cores ($m=3$) are used in practice. 
(a--b) Scatter plots for comparison between approximated values and precise values of $\mathbf{D}_{\xpos}(S)$; (c--d) Scatter plots for comparison between approximated values and precise values of $\mathbf{D}_{\xpos}^\text{avg}(S)$; 
(e--f) Scatter plots for time cost comparison between obtaining approximation values and obtaining precise values of $\mathbf{D}_{\xpos}(S)$ and $\mathbf{D}_{\xpos}^\text{avg}(S)$; 
(g--h) Speedup and efficiency for comparison between with and without integrating parallel computing. Note that integrating parallel computing does accelerate the execution of \ppsuperabbr{} much without sacrificing approximation performance. 
}\label{fig:par_comput}
\end{figure*}

\begin{figure}
\centering
\subfloat[]{\includegraphics[height=3.5cm]{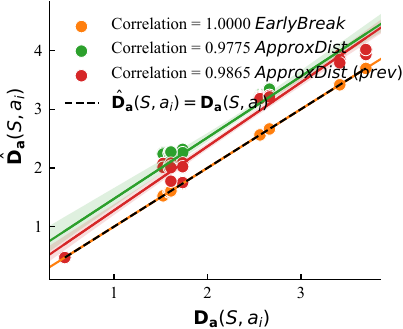}}
\subfloat[]{\label{subfig:efficient,b}
\includegraphics[height=3.5cm]{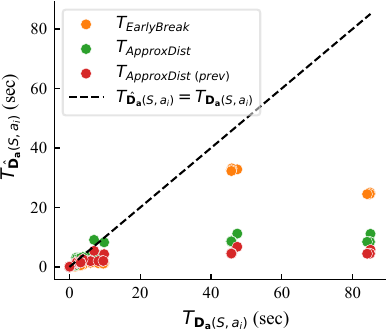}}
\caption{Comparison between \ppsalgabbr{} and EarlyBreak \cite{taha2015efficient}. Note that \ppsalgabbr{} (prev) is the approximation algorithm from our previous work \cite{bian2024does}, which is also supposed to approximate $\mathbf{D}_\cdot(S_1,\bar{S}_1)$, equivalent to $\mathbf{D}_{\cdot,\xpos}(S,a_i)$ in this paper.}
\label{fig:efficienthd}
\end{figure}

\subsection{Effect of hyperparameters $m_1$ and $m_2$}
\label{subsec:RQ3}
In this subsection, we investigate whether different choices of hyperparameters (that is, $m_1$ and $m_2$) would affect the performance of \ppsalgabbr{} and \ppsuperabbr{} or not. 
Different $m_2$ values are tested when $m_1$ is fixed, and vice versa, with empirical results presented in Figures~\ref{fig:hyper,ext} and \ref{fig:hyper,bin}. 

\subsubsection{Effect on \ppsalgabbr}
As we can see from Figures~\ref{fig:hyper,ext}\subref{subfig:pm,ext,9} and \ref{fig:hyper,ext}\subref{subfig:pm,ext,11}, 
when direct computation of distances (\ie{} maximal distance $\extDistAlt{}$ and average distance $\extDistAlt{avg}$) is expensive, obtaining their approximated values via \ppsalgabbr{} distinctly costs less time than that of precise values by Eq.~\eqref{eq:4a} and \eqref{eq:4b}. 
Increasing $m_2$ (or $m_1$) in \ppsalgabbr{} would cost more time, while the effect of increasing $m_1$ is more obvious. 

As for the approximation performance of $\extDistAlt{}$ shown in Fig.~\ref{fig:hyper,ext}\subref{subfig:pm,ext,1} and \ref{fig:hyper,ext}\subref{subfig:pm,ext,3} as well as approximation performance of $\extDistAlt{avg}$ shown in Fig.~\ref{fig:hyper,ext}\subref{subfig:pm,ext,5} and \ref{fig:hyper,ext}\subref{subfig:pm,ext,7}, 
all approximated values are highly correlated and close to the precise values of distance, no matter how small $m_2$ (or $m_1$) is, which means the effect of improper choices of hyperparameters is unapparent; 
As $m_2$ increases, the approximated values would be closer to the precise values of distance, while the effect of changing $m_1$ would be less manifest.

\subsubsection{Effect on \ppsuperabbr}
As we can see from Figures~\ref{fig:hyper,ext}\subref{subfig:pm,ext,10} and \ref{fig:hyper,ext}\subref{subfig:pm,ext,12}, 
when direct computation of distances (\ie{} maximal distance $\extEistAlt{}$ and average distance $\extEistAlt{avg}$) is expensive, 
obtaining their approximated values via \ppsuperabbr{} distinctly costs less time than that of precise values by Eq.~\eqref{eq:6a} and \eqref{eq:6b}. 
Increasing $m_2$ (or $m_1$) in \ppsuperabbr{} would cost more time, while the effect of increasing $m_1$ is more obvious. 

As for the approximation performance of $\extEistAlt{}$ shown in Figures~\ref{fig:hyper,ext}\subref{subfig:pm,ext,2} and \ref{fig:hyper,ext}\subref{subfig:pm,ext,4} as well as approximation performance of $\extEistAlt{avg}$ shown in Figures~\ref{fig:hyper,ext}\subref{subfig:pm,ext,6} and \ref{fig:hyper,ext}\subref{subfig:pm,ext,8}, 
all approximated values are highly correlated and close to the precise values of distance no matter how small $m_2$ (or $m_1$) is, which means the effect of improper choices of hyperparameters is unapparent; 
As $m_2$ increases, the approximated values would be closer to the precise values of distance, while the effect of changing $m_1$ would be less manifest.

\subsubsection{Comparison of \ppsalgabbr{} between our previous work \cite{bian2024does} and the current version in this paper} 
Furthermore, we also present the comparison between our previous work \cite{bian2024does} and \ppsalgabbr{} (Algorithm~\ref{alg:approx}) in Fig.~\ref{fig:hyper,bin}. 

We can see from Figures~\ref{fig:hyper,bin}\subref{subfig:pm,bin,1}, \ref{fig:hyper,bin}\subref{subfig:pm,bin,2}, \ref{fig:hyper,bin}\subref{subfig:pm,bin,5}, and \ref{fig:hyper,bin}\subref{subfig:pm,bin,6} that our previous \ppsalgabbr{} demonstrates slightly higher correlation to precise values of maximal distance $\extDistAlt{}$ (also known as $\newDist(S_1,\bar{S}_1)$ in scenarios of binary values) than its current version in this work; 
Different choices of $m_2$ (or $m_1$) cause nearly imperceptible effects on their approximation effectiveness. 
The previous version also shows close and even better performance on compressed time cost than the current \ppsalgabbr{} here, 
depicted in Figures~\ref{fig:hyper,bin}\subref{subfig:pm,bin,3}, \ref{fig:hyper,bin}\subref{subfig:pm,bin,4}, \ref{fig:hyper,bin}\subref{subfig:pm,bin,7}, and \ref{fig:hyper,bin}\subref{subfig:pm,bin,8}, especially when direct computation is not much expensive. 
However, when the execution time cost of direct computation is relatively cheap, \ppsuperabbr{} displays a messy and worse execution speed than \ppsalgabbr{}, shown in Figures~\ref{fig:hyper,ext}\subref{subfig:pm,ext,11} and \ref{fig:hyper,ext}\subref{subfig:pm,ext,12}. 
We believe there are mainly two reasons for this phenomenon: one is that \ppssubabbr{} is repeated twice from line \ref{line:app,2} to line \ref{line:app,5} in Algorithm~\ref{alg:approx} while it is executed only once in the previous \ppsalgabbr{} \cite{bian2024does}, causing Algorithm~\ref{alg:approx} a slightly longer execution time than its previous version; the other is that parallel computing is integrated in \ppsuperabbr{} in practice to further accelerate its execution, detailed more in Section~\ref{subsec:expt,limit} and Fig.~\ref{fig:par_comput}.

\begin{figure}
\centering
\subfloat[]{\includegraphics[height=3.4cm]{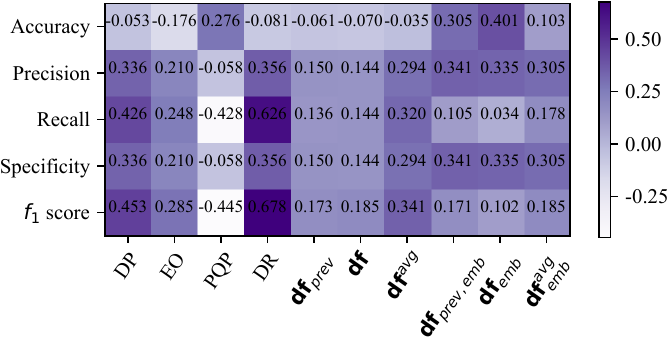}}\\ \vspace{-3mm}
\subfloat[]{\includegraphics[height=3.4cm]{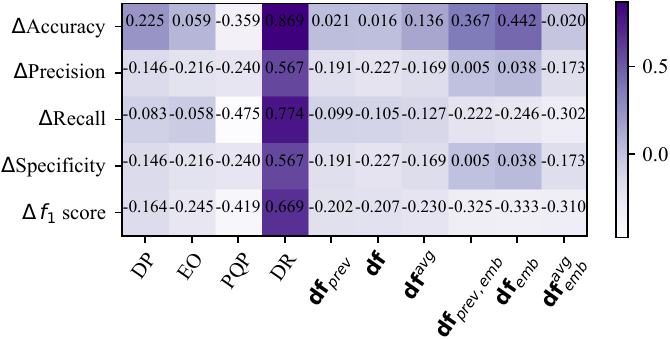}}\\ \vspace{-3mm}
\subfloat[]{\includegraphics[height=3.4cm]{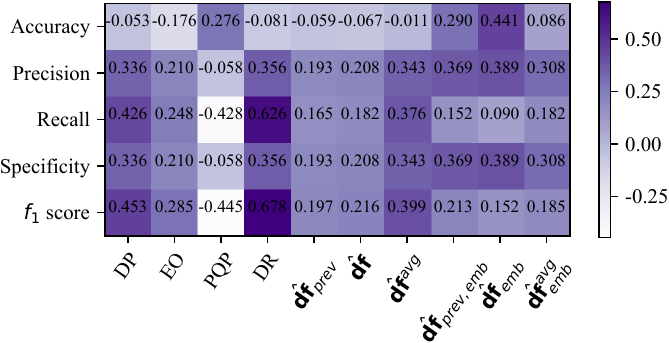}}\\ \vspace{-3mm}
\subfloat[]{\includegraphics[height=3.4cm]{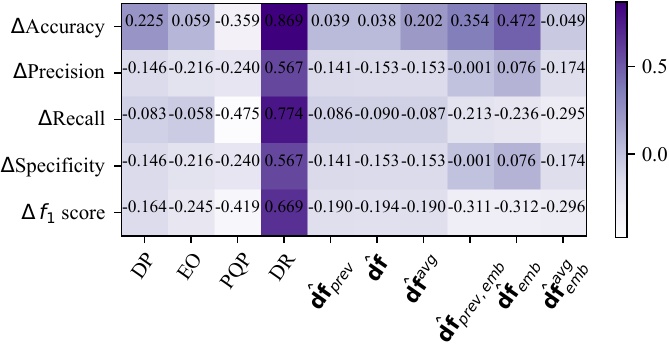}} \vspace{-3mm}
\caption{Potential exploration for more complex features, using a multilayer perceptron (MLP). 
Note that $\newfist_\text{prev}, \newfist, \newfist^\text{avg}$ are calculated based on original features, while $\newfist_\text{prev,emb}, \newfist_\text{emb}, \newfist_\text{emb}^\text{avg}$ are calculated using replaced features in the MLP, as described in Section~\ref{subsec:embed}. 
Also note that $\hat{\newfist}$ in (c) and (d) indicates that the distances are calculated using approximation algorithms, while $\newfist$ indicates the distances are calculated using the direct computation; So are the others. 
}\label{fig:embedding}
\end{figure}
\subsection{Potential exploration for more complex feature sets}
\label{subsec:embed}

We did  some preliminary exploration into the possibility of applying \ppsdisabbr{} to more complex feature sets, such as visual features. 
Instead of directly using \ppsdisabbr{} on visual datasets, we use the represented (or embedded) features before the final output of a neural network as an alternative way to test it. 
In other words, we still compute $\newDist{}_{\xpos}(S,a_i)$ with the original features, but replace those features with the represented vectors in neural networks in $\newDist{}_{f,\xpos}(S,a_i)$, and then calculate $\newDist{}_{\xpos}(S)$, $\newDist{}_{f,\xpos}(S)$, and $\newfist(f)$ as usual. 
Analogously, we can get $\newDist^\text{avg}_{f,\xpos}(S,a_i)$, $\newDist^\text{avg}_{f,\xpos}(S)$, and $\newfist^\text{avg}(f)$. 
The neural network we used in this experiment was a multilayer perceptron, where the input dimension depends on the dataset in use, 
followed by one hidden layer with a size of 256, ReLU, one hidden layer with a size of 64, ReLU, and one fully connected layer before Softmax. 
The empirical results are presented in Fig.~\ref{fig:embedding}, where we can observe that $\newfist_\text{emb}$ using the embedded features shows a higher correlation with $\Delta$Accuracy, compared with $\newfist$, meaning that \ppsdisabbr{} has the potential to be applied to complex features.

\subsection{Discussion and limitations}
\label{subsec:expt,limit}

Given the wide applications of ML models in the real world nowadays and the complexity of discrimination mitigation in the face of multiple factors interweaving, it matters a lot to bring in such techniques to deal with several SAs with even multiple values. 
Therefore, our work provides a fine-grained fairness measure option named \ppsdisabbr{} that captures the bias level of models more finely, in order to better detect and moderate discrimination within. 
The proposed \ppsdisabbr{} is suitable for both binary and multi-class classification, thus enlarging its applicable value. 
To efficiently approximate the value of \ppsdisabbr{}, we further proposed \ppsalgabbr{} and \ppsuperabbr{} to speed up the expensive calculation process, of which the effectiveness and efficiency have been demonstrated in Section~\ref{subsec:RQ2}. 
However, there are also limitations in the proposed approximation algorithms. 
The major one is that their time cost will significantly increase if the number of possible values within a SA is relatively large. 
For instance, the computation incurring on the PPR/PPVR datasets may take close or sometimes even longer time than that on the Income dataset, even though the latter has way more instances than the former, because there are six sub-groups under the race attribute on PPR/PPVR while the number is only five on Income. 
Therefore, we integrate parallel computing in practice to further raise the execution speed of \ppsuperabbr{}. 
To be specific, we use three cores to run 
lines~\ref{line:sup,1} to \ref{line:sup,3} of Algorithm~\ref{alg:extend} in parallel in our experiments (Fig.~\ref{fig:par_comput}), while the choice of the number of cores is not a fixed constant. In other words, using two or four cores is also acceptable if the practitioners like it. 
Furthermore, it is easy to tell that there is still room for improvement in the approximation algorithms. 
For instance, it might achieve faster computing times in \ppsalgabbr{} if the procedure between lines~\ref{line:app,1} and \ref{line:app,8} in Algorithm~\ref{alg:approx} could be executed in parallel, although we did not perform that this time. 
So does that between lines~\ref{line:acc,3} and \ref{line:acc,11} of Algorithm~\ref{alg:accele}. 
But we believe it will need a more deliberate design to balance the parallel computing among them in case the cost of generating more threads/processes to achieve it is not worth enough as expected compared with its computing results. 
Therefore, we would rather leave it for future work, instead of cramming too much and confusing our main contributions in this work.\looseness=-1

\section{Conclusion}

In this paper, we investigate how to evaluate the discrimination level of classifiers in the face of multi-attribute protection scenarios and present a novel harmonic fairness measure with two optional versions (that is, maximum \ppsdisabbr{} and average \ppsdisabbr{}), of which both are based on distances between sets from a manifold perspective. 
To accelerate the computation of distances between sets and reduce its time cost from $\comp(n^2)$ to $\comp(n\log n)$, we further propose two approximation algorithms (that is, \ppsalgabbr{} and \ppsuperabbr{}) to resolve bias evaluation in scenarios for single attribute protection and multi-attribute protection, respectively. 
Furthermore, we provide an algorithmic effectiveness analysis for \ppsalgabbr{} under certain assumptions to explain how well it could work theoretically. 
The empirical results have demonstrated that the proposed fairness measure (including maximum \ppsdisabbr{} and average \ppsdisabbr{}) and approximation algorithms (\ie{} \ppsalgabbr{} and \ppsuperabbr{}) are valid and effective.\looseness=-1

%
\bibliographystyle{IEEEtran}
\bibliography{nus_title_iso4,refsmac}

\end{document}